%% file: main.tex
\documentclass{article}

\usepackage[main, final]{neurips_2026}

\usepackage[utf8]{inputenc}
\usepackage[T1]{fontenc}
\usepackage{microtype}

\usepackage{amsmath,amssymb,amsthm}

\usepackage{graphicx}
\usepackage[caption=false]{subfig}
\usepackage{float}

\usepackage{booktabs}
\usepackage{multirow}
\usepackage{tabularx}

\usepackage{xcolor}
\usepackage{hyperref}
\usepackage[capitalise,noabbrev]{cleveref}

\usepackage{url}

\usepackage{nicefrac}

\usepackage{pifont}

\newtheorem{definition}{Definition}
\newtheorem{proposition}{Proposition}

\title{GF-Score: Certified Class-Conditional Robustness Evaluation with Fairness Guarantees}

\author{%
  Arya Shah\\
  IIT Gandhinagar\\
  \texttt{arya.shah@iitgn.ac.in}
  \And
  Kaveri Visavadiya\\
  IIT Gandhinagar\\
  \texttt{kaveri.visavadiya@iitgn.ac.in}
  \And
  Manisha Padala\\
  IIT Gandhinagar\\
  \texttt{manisha.padala@iitgn.ac.in}
}

\begin{document}

\maketitle

\input{sections/abstract}

\input{sections/introduction}

\input{sections/related_work}
\input{sections/method}

\input{sections/experimental_setup}

\input{sections/results}

\input{sections/discussion}

\bibliographystyle{plainnat}
\bibliography{references}

\appendix
\input{sections/appendix}

\end{document}

%% file: sections/abstract.tex
\begin{abstract}
Adversarial robustness is essential for deploying neural networks in safety-critical applications, yet standard evaluation methods either require expensive adversarial attacks or report only a single aggregate score that obscures how robustness is distributed across classes. We introduce the \emph{GF-Score} (GREAT-Fairness Score), a framework that decomposes the certified GREAT Score into per-class robustness profiles and quantifies their disparity through four metrics grounded in welfare economics: the Robustness Disparity Index (RDI), the Normalized Robustness Gini Coefficient (NRGC), Worst-Case Class Robustness (WCR), and a Fairness-Penalized GREAT Score (FP-GREAT). The framework further eliminates the original method's dependence on adversarial attacks through a self-calibration procedure that tunes the temperature parameter using only clean accuracy correlations. Evaluating 22 models from RobustBench across CIFAR-10 and ImageNet, we find that the decomposition is exact, that per-class scores reveal consistent vulnerability patterns (e.g., ``cat'' is the weakest class in 76\% of CIFAR-10 models), and that more robust models tend to exhibit greater class-level disparity. These results establish a practical, attack-free auditing pipeline for diagnosing where certified robustness guarantees fail to protect all classes equally. We release our code on \href{https://github.com/aryashah2k/gf-score}{GitHub}.
\end{abstract}

%% file: sections/introduction.tex
\section{Introduction}
\label{sec:introduction}

Deep neural networks are vulnerable to adversarial examples: imperceptible perturbations that cause confident misclassifications~\citep{szegedy2014intriguingpropertiesneuralnetworks, goodfellow2015explainingharnessingadversarialexamples}. This vulnerability poses serious risks in safety-critical settings such as autonomous driving and medical diagnosis, where a single misclassification can have catastrophic consequences. Adversarial training~\citep{madry2019deeplearningmodelsresistant} remains the dominant defense, and substantial progress has been tracked through standardized benchmarks like RobustBench~\citep{croce2021robustbenchstandardizedadversarialrobustness}, which ranks models by their accuracy under the AutoAttack ensemble~\citep{croce2020reliableevaluationadversarialrobustness}. Yet a fundamental tension exists between robustness and standard accuracy~\citep{tsipras2019robustnessoddsaccuracy}, and recent work on certified defenses~\citep{cohen2019certifiedadversarialrobustnessrandomized} has shifted attention toward provable guarantees rather than empirical attack evaluations alone.

A critical limitation of current evaluation practice is that robustness is almost always reported as a single aggregate number. Whether the metric is empirical robust accuracy or a certified lower bound, it averages over the entire test distribution and thereby conceals how robustness is distributed across classes. Several studies have shown that adversarial training induces pronounced class-wise performance gaps: certain classes become far more vulnerable than others under the same model~\citep{benz2021robustnessoddsfairnessempirical, xu2021robustfairfairnessadversarial, tian2021analysisapplicationsclasswiserobustness}. For instance, an autonomous perception system may appear globally robust while being nearly defenseless on pedestrian classes. Despite this, no existing framework provides \emph{certified, attack-free, per-class} robustness evaluation. The GREAT Score~\citep{li2024greatscoreglobalrobustness}, introduced at NeurIPS 2024, offers a certified global robustness bound using only generative model samples and forward passes, achieving roughly 2{,}000$\times$ speedup over attack-based methods. However, it reports only a single scalar, inheriting the same class-blindness problem. Concurrently, training-time fairness interventions~\citep{wei2023cfaclasswisecalibratedfair, sun2022improvingrobustfairnessbalance, li2023watimproveworstclassrobustness, zhang2024fairnessawareadversariallearning} address the disparity during model optimization but offer no tools for post-hoc auditing of already-deployed models.

In this paper, we introduce the \emph{GF-Score} (GREAT-Fairness Score), a framework that bridges this gap through three components. First, we decompose the GREAT Score into per-class certified robustness profiles by partitioning samples according to their ground-truth labels and computing class-conditional confidence margins. This decomposition is exact: the weighted sum of per-class scores recovers the aggregate score with zero numerical error. Second, we quantify the disparity of these per-class profiles through four metrics grounded in welfare economics and fairness theory: the Robustness Disparity Index (RDI), the Normalized Robustness Gini Coefficient (NRGC), Worst-Case Class Robustness (WCR), and a Fairness-Penalized GREAT Score (FP-GREAT). Third, we eliminate the original method's dependence on adversarial attacks for temperature calibration by introducing a self-calibration procedure that maximizes rank correlation with publicly available clean accuracies.

We evaluate the GF-Score on 22 robust models from RobustBench spanning CIFAR-10 (17 $\ell_2$ models) and ImageNet (5 $\ell_\infty$ models). Our experiments yield several notable findings. The class-conditional decomposition is \emph{exactly} consistent across all 22 models, confirming the mathematical validity of the approach. Per-class analysis reveals that the class ``cat'' is the most vulnerable in 76\% of CIFAR-10 models, while ``automobile'' is consistently the most robust, suggesting that class vulnerability is an intrinsic data property rather than a training artifact. We observe a positive correlation between aggregate robustness and the Robustness Disparity Index, providing new quantitative evidence for the tension between robustness and fairness identified by prior work~\citep{xu2021robustfairfairnessadversarial, benz2021robustnessoddsfairnessempirical}. Our attack-free self-calibration achieves a Spearman rank correlation of $\rho = 0.871$ on CIFAR-10 and $\rho = 1.000$ on ImageNet with RobustBench rankings, making the entire evaluation pipeline truly attack-free.

\begin{figure}[t]
    \centering
    \includegraphics[width=\linewidth]{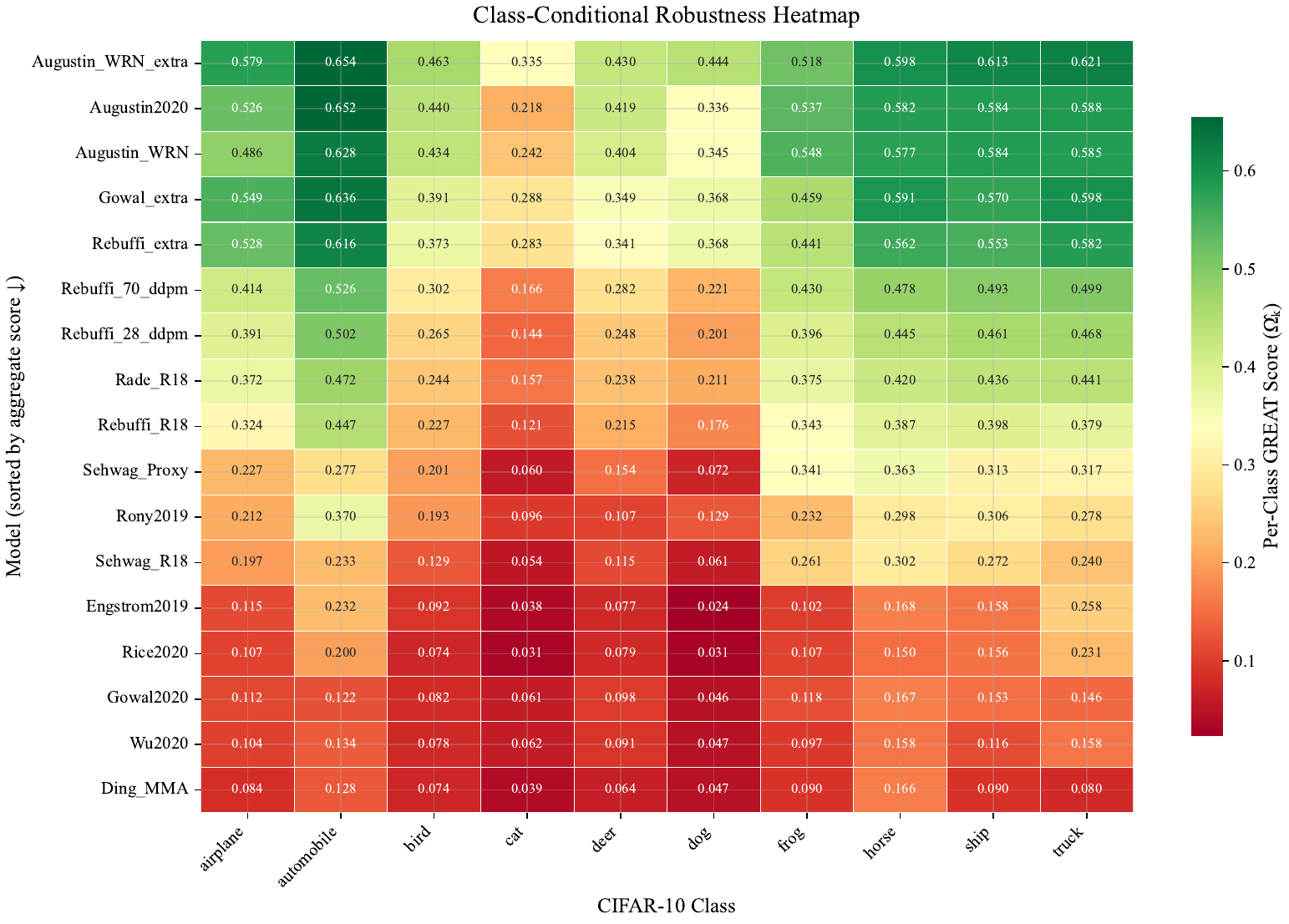}
    \caption{Per-class GREAT Scores for 17 CIFAR-10 models reveal substantial class-level robustness disparity hidden by aggregate scores. Each row is a model (sorted by aggregate GREAT Score); each column is a class. The class ``cat'' is consistently the most vulnerable (darkest column), while ``automobile'' is most often the most robust (10 of 17 models). Models with higher aggregate robustness (top rows) tend to exhibit \emph{greater} disparity between their strongest and weakest classes.}
    \label{fig:heatmap}
\end{figure}

In summary, we make the following contributions:
\begin{enumerate}
    \item We propose a \textbf{class-conditional decomposition} of the GREAT Score that preserves the certified lower-bound guarantee at per-class granularity, with formal concentration bounds (Propositions~\ref{prop:per_class_concentration} and~\ref{prop:rdi_concentration}).
    \item We introduce \textbf{four fairness-aware disparity metrics} (RDI, NRGC, WCR, FP-GREAT) grounded in welfare economics that quantify how robustness is distributed across classes.
    \item We propose an \textbf{attack-free self-calibration} procedure that replaces adversarial-attack-based temperature tuning with clean accuracy correlation, enabling fully attack-free evaluation.
    \item We conduct extensive experiments on \textbf{22 models across two benchmarks}, revealing consistent class vulnerability patterns and a quantifiable robustness-fairness tension that aggregate metrics conceal.
\end{enumerate}

%% file: sections/related_work.tex
\section{Related Work}
\label{sec:related_work}

Our work lies at the intersection of three active research areas: adversarial robustness evaluation, fairness in adversarial training, and inequality measurement in machine learning. We synthesize each area below and position our contribution relative to existing methods in \Cref{tab:comparison}.

\subsection{Adversarial Robustness Evaluation}
\label{sec:rw_evaluation}

Since the discovery that neural networks are vulnerable to imperceptible perturbations~\citep{szegedy2014intriguingpropertiesneuralnetworks, goodfellow2015explainingharnessingadversarialexamples}, a rich line of work has developed increasingly powerful attacks to evaluate robustness. Gradient-based methods such as PGD~\citep{madry2019deeplearningmodelsresistant} and the C\&W attack~\citep{carlini2017evaluatingrobustnessneuralnetworks} became standard tools, though \citet{athalye2018obfuscatedgradientsfalsesense} showed that many defenses merely obfuscated gradients rather than achieving true robustness. The AutoAttack ensemble~\citep{croce2020reliableevaluationadversarialrobustness} addressed this by combining complementary attack strategies into a reliable evaluation protocol, and RobustBench~\citep{croce2021robustbenchstandardizedadversarialrobustness} standardized model comparison through a public leaderboard.

In parallel, certified approaches emerged to provide provable robustness guarantees. Convex relaxation methods~\citep{wong2018provabledefensesadversarialexamples} and randomized smoothing~\citep{cohen2019certifiedadversarialrobustnessrandomized, lecuyer2019certifiedrobustnessadversarialexamples, salman2020provablyrobustdeeplearning} offer formal certificates that no perturbation within a given norm ball can change the prediction. \citet{li2023sokcertifiedrobustnessdeep} provide a comprehensive systematization of these approaches. However, all certified methods operate at the \emph{per-sample} level, producing local guarantees that must be aggregated to characterize a model's overall robustness. The GREAT Score~\citep{li2024greatscoreglobalrobustness} took a fundamentally different approach by defining a \emph{global} certified robustness metric over the data distribution using generative models, achieving strong rank correlation with RobustBench while requiring only forward passes. Our work extends the GREAT Score by decomposing this global metric into per-class components, revealing structure that the aggregate score conceals.

\subsection{Fairness in Adversarial Robustness}
\label{sec:rw_fairness}

The observation that adversarial training creates significant class-wise performance gaps was first documented empirically by \citet{benz2021robustnessoddsfairnessempirical}, who showed that robust accuracy can vary by over 30 percentage points across classes. \citet{xu2021robustfairfairnessadversarial} formalized this as the \emph{robust fairness} problem and proposed Fair Robust Learning (FRL) through class-level reweighting and remargin strategies. \citet{tian2021analysisapplicationsclasswiserobustness} further analyzed the phenomenon at KDD 2021, demonstrating that class-wise vulnerability patterns are systematic rather than random.

These findings inspired a wave of training-time interventions. \citet{wei2023cfaclasswisecalibratedfair} proposed class-wise calibrated adversarial configurations (CFA) that adapt attack strength per class. \citet{sun2022improvingrobustfairnessbalance} introduced balanced adversarial training (BAT) to equalize robustness across classes. \citet{li2023watimproveworstclassrobustness} optimized directly for worst-class robustness via WAT. More recently, \citet{zhang2024fairnessawareadversariallearning} framed the problem through distributionally robust optimization (FAAL), \citet{zhi2025fairclasswiserobustnessclass} proposed class-optimal distribution adversarial training (CODA), \citet{jin2025enhancingrobustfairnessconfusional} regularized the spectral norm of the robust confusion matrix at ICLR 2025, and \citet{lin2023hardadversarialexamplemining} proposed hard adversarial example mining to improve robust fairness. The connection to broader fairness literature is reinforced by \citet{sagawa2020distributionallyrobustneuralnetworks}, who showed that standard training can fail on minority groups and proposed GroupDRO for worst-group optimization. Several concurrent works continue to address class-wise disparity from complementary angles~\citep{amerehi2025narrowingclasswiserobustnessgaps, zhu2026reducingclasswiseperformancedisparity, Mou2025-he}.

A crucial observation is that \textbf{all of the above methods operate at training time}. They modify the adversarial training procedure to produce fairer models, but they do not provide tools for \emph{post-hoc auditing} of models that have already been trained and deployed. Our framework fills precisely this gap: it evaluates and quantifies class-level robustness disparity for any given model, without requiring retraining or adversarial attacks.

\subsection{Inequality Metrics and Welfare-Theoretic Fairness}
\label{sec:rw_inequality}

Our disparity metrics draw on a long tradition in welfare economics. The Gini coefficient has been the standard measure of distributional inequality for over a century, and the Rawlsian maximin principle~\citep{rawls1971theory} provides a philosophical foundation for prioritizing the worst-off group. \citet{Speicher_2018} were among the first to connect these classical inequality indices to algorithmic fairness, proposing a unified framework that encompasses individual and group fairness through generalized entropy indices and the Gini coefficient. \citet{cousins2021axiomatictheoryprovablyfairwelfarecentric} formalized welfare-centric machine learning axiomatically at NeurIPS 2021, establishing conditions under which welfare functions yield provably fair outcomes.

Despite this rich foundation, no prior work has applied formal inequality metrics to \emph{certified robustness bounds}. Existing fairness-aware robustness studies~\citep{xu2021robustfairfairnessadversarial, wei2023cfaclasswisecalibratedfair, zhang2024fairnessawareadversariallearning} use ad-hoc measures such as the gap between the best and worst class accuracy, without grounding them in established fairness theory. Our RDI, NRGC, WCR, and FP-GREAT metrics bridge this disconnect by applying principled inequality measures from welfare economics directly to certified per-class robustness scores, and we further provide formal concentration bounds on these metrics via Hoeffding's inequality~\citep{Hoeffding1963-sq}.

\subsection{Positioning of This Work}

\Cref{tab:comparison} summarizes how the GF-Score relates to prior work along five key dimensions. Unlike training-time fairness methods, our framework requires no model modification. Unlike existing evaluation methods, it provides per-class certified guarantees with formal disparity quantification. The combination of attack-free evaluation, class-conditional decomposition, and welfare-grounded fairness metrics is, to our knowledge, novel.

\begin{table}[t]
    \caption{Comparison of the GF-Score with prior methods across five key dimensions. \textbf{Certified}: provides provable robustness guarantees. \textbf{Per-class}: offers class-level granularity. \textbf{Attack-free}: does not require adversarial attacks. \textbf{Post-hoc}: applicable to already-trained models. \textbf{Fairness metrics}: includes formal disparity quantification grounded in established theory.}
    \label{tab:comparison}
    \centering
    \small
    \begin{tabular}{@{}lccccc@{}}
        \toprule
        \textbf{Method} & \textbf{Certified} & \textbf{Per-class} & \textbf{Attack-free} & \textbf{Post-hoc} & \textbf{Fairness metrics} \\
        \midrule
        AutoAttack~\citep{croce2020reliableevaluationadversarialrobustness} & \ding{55} & \ding{55} & \ding{55} & \ding{51} & \ding{55} \\
        RobustBench~\citep{croce2021robustbenchstandardizedadversarialrobustness} & \ding{55} & \ding{55} & \ding{55} & \ding{51} & \ding{55} \\
        Rand.\ Smoothing~\citep{cohen2019certifiedadversarialrobustnessrandomized} & \ding{51} & \ding{51}\textsuperscript{*} & \ding{51} & \ding{51} & \ding{55} \\
        GREAT Score~\citep{li2024greatscoreglobalrobustness} & \ding{51} & \ding{55} & \ding{51} & \ding{51} & \ding{55} \\
        FRL~\citep{xu2021robustfairfairnessadversarial} & \ding{55} & \ding{51} & \ding{55} & \ding{55} & \ding{55} \\
        CFA~\citep{wei2023cfaclasswisecalibratedfair} & \ding{55} & \ding{51} & \ding{55} & \ding{55} & \ding{55} \\
        WAT~\citep{li2023watimproveworstclassrobustness} & \ding{55} & \ding{51} & \ding{55} & \ding{55} & \ding{55} \\
        BAT~\citep{sun2022improvingrobustfairnessbalance} & \ding{55} & \ding{51} & \ding{55} & \ding{55} & \ding{55} \\
        FAAL~\citep{zhang2024fairnessawareadversariallearning} & \ding{55} & \ding{51} & \ding{55} & \ding{55} & \ding{55} \\
        CODA~\citep{zhi2025fairclasswiserobustnessclass} & \ding{55} & \ding{51} & \ding{55} & \ding{55} & \ding{55} \\
        CSR~\citep{jin2025enhancingrobustfairnessconfusional} & \ding{55} & \ding{51} & \ding{55} & \ding{55} & \ding{55} \\
        \midrule
        \textbf{GF-Score (Ours)} & \ding{51} & \ding{51} & \ding{51} & \ding{51} & \ding{51} \\
        \bottomrule
        \multicolumn{6}{@{}l}{\textsuperscript{*}\footnotesize Per-sample certificates can be aggregated per class, but no prior work has done so with disparity metrics.}
    \end{tabular}
\end{table}

%% file: sections/method.tex
\section{Methodology}
\label{sec:method}

We present the GF-Score framework in three parts. \Cref{sec:decomposition} introduces the class-conditional decomposition of the GREAT Score with its consistency guarantee. \Cref{sec:metrics} defines four disparity metrics grounded in welfare economics. \Cref{sec:calibration} describes the attack-free self-calibration procedure that makes the entire pipeline independent of adversarial attacks. Formal concentration bounds and their proofs are deferred to \Cref{sec:theoretical}.

\subsection{Preliminaries: The GREAT Score}
\label{sec:prelim}

We briefly recall the GREAT Score~\citep{li2024greatscoreglobalrobustness}. Let $f: \mathcal{X} \to \mathbb{R}^K$ be a classifier mapping inputs to $K$-dimensional logits. Given a generative model $G$ that approximates the data distribution, the GREAT Score defines a certified global robustness metric as
\begin{equation}
    \Omega(f) = \mathbb{E}_{z \sim p_z}\bigl[g\bigl(G(z)\bigr)\bigr],
    \label{eq:great_score}
\end{equation}
where $g(x) = \sqrt{\pi/2} \cdot \max\bigl\{\sigma(f(x))_{y} - \max_{j \neq y} \sigma(f(x))_j,\; 0\bigr\}$ is the local robustness score, $\sigma$ denotes the sigmoid or softmax activation scaled by a temperature parameter $T$, and $y$ is the ground-truth label of $x$. The local score $g(x)$ provides a certified lower bound on the $\ell_2$ perturbation required to change the prediction at $x$~\citep{li2024greatscoreglobalrobustness}. Given $N$ i.i.d.\ samples, the finite-sample estimator is $\hat{\Omega}(f) = \frac{1}{N}\sum_{i=1}^{N} g(x_i)$.

\subsection{Class-Conditional Decomposition}
\label{sec:decomposition}

The aggregate GREAT Score averages the local robustness scores over all samples irrespective of their class membership. We propose to partition the evaluation set by ground-truth labels to obtain per-class robustness profiles.

\begin{definition}[Per-Class GREAT Score]
\label{def:per_class}
Let $\mathcal{S}_k = \{x_i : y_i = k\}$ denote the set of $n_k$ samples belonging to class $k \in \{1, \ldots, K\}$. The per-class GREAT Score for class $k$ is
\begin{equation}
    \hat{\Omega}_k(f) = \frac{1}{n_k} \sum_{i : y_i = k} g(x_i).
    \label{eq:per_class_score}
\end{equation}
\end{definition}

Since $g(x_i)$ inherits the certified lower-bound property of the original GREAT Score for each sample, $\hat{\Omega}_k(f)$ is the average certified robustness bound restricted to class $k$. The following result shows that this decomposition is exact.

\textbf{Decomposition consistency.} The aggregate score can be recovered from the per-class scores as a weighted average:
\begin{equation}
    \hat{\Omega}(f) = \sum_{k=1}^{K} \frac{n_k}{N} \hat{\Omega}_k(f),
    \label{eq:decomposition}
\end{equation}
where $N = \sum_{k=1}^{K} n_k$ is the total number of samples. This identity holds exactly by linearity of the mean: each sample contributes to exactly one class partition, and the weighted recombination recovers the global average. We verify empirically that this identity holds with zero numerical error across all 22 models (\Cref{sec:exp_decomposition}). Concentration bounds for the per-class estimates are established in \Cref{sec:theoretical}.

\subsection{Robustness Disparity Metrics}
\label{sec:metrics}

Given the $K$-dimensional vector of per-class scores $(\hat{\Omega}_1, \ldots, \hat{\Omega}_K)$, we define four complementary metrics that each capture a different facet of the robustness distribution. These metrics are grounded in welfare economics and adapted to the robustness evaluation setting.

\begin{definition}[Robustness Disparity Index (RDI)]
\label{def:rdi}
\begin{equation}
    \mathrm{RDI}(f) = \max_{k} \hat{\Omega}_k(f) - \min_{k} \hat{\Omega}_k(f).
    \label{eq:rdi}
\end{equation}
The RDI measures the range of per-class robustness scores. It is zero if and only if all classes have equal robustness, and is bounded above by $\sqrt{\pi/2} \approx 1.253$. This metric adapts the Max Group Disparity principle commonly used in fairness auditing.
\end{definition}

\begin{definition}[Normalized Robustness Gini Coefficient (NRGC)]
\label{def:nrgc}
\begin{equation}
    \mathrm{NRGC}(f) = \frac{\displaystyle\sum_{i=1}^{K}\sum_{j=1}^{K} |\hat{\Omega}_i - \hat{\Omega}_j|}{2K^2 \cdot \bar{\Omega}},
    \label{eq:nrgc}
\end{equation}
where $\bar{\Omega} = \frac{1}{K}\sum_{k=1}^{K} \hat{\Omega}_k$ is the mean per-class score. The NRGC adapts the Gini coefficient to robustness evaluation: it lies in $[0, 1)$, with 0 indicating perfect equality and values approaching 1 indicating maximal concentration. Unlike RDI, the NRGC captures the full shape of the distribution rather than only the extremes, making it more informative when $K > 2$.
\end{definition}

\begin{definition}[Worst-Case Class Robustness (WCR)]
\label{def:wcr}
\begin{equation}
    \mathrm{WCR}(f) = \min_{k} \hat{\Omega}_k(f).
    \label{eq:wcr}
\end{equation}
Grounded in the Rawlsian maximin principle~\citep{rawls1971theory}, WCR gives the certified robustness level guaranteed for \emph{every} class. A model passes a fairness audit only if $\mathrm{WCR}(f) \geq \tau$ for some application-specific threshold $\tau$. This metric is particularly relevant for safety-critical deployments where no class can be left unprotected.
\end{definition}

\begin{definition}[Fairness-Penalized GREAT Score (FP-GREAT)]
\label{def:fp_great}
\begin{equation}
    \mathrm{FP\text{-}GREAT}(f; \lambda) = \bar{\Omega}(f) - \lambda \cdot \mathrm{RDI}(f),
    \label{eq:fp_great}
\end{equation}
where $\lambda \geq 0$ controls the fairness penalty weight. When $\lambda = 0$, the metric reduces to the mean per-class GREAT Score (no fairness consideration). As $\lambda$ increases, models with high disparity are penalized more heavily. This formulation adapts the Inequality-Adjusted Human Development Index (IHDI) used by the United Nations Development Programme, which discounts aggregate welfare by an inequality measure.
\end{definition}

\textbf{Complementarity of metrics.} The four metrics serve distinct roles. RDI flags the existence of disparity; NRGC quantifies its severity across the full distribution; WCR identifies the weakest link; FP-GREAT produces an adjusted ranking. Together, they provide a comprehensive robustness-fairness profile for any model. A concentration bound for the empirical RDI is provided in \Cref{sec:theoretical}.

\subsection{Attack-Free Self-Calibration}
\label{sec:calibration}

The original GREAT Score framework uses the C\&W attack~\citep{carlini2017evaluatingrobustnessneuralnetworks} to calibrate the temperature parameter $T$ in the softmax/sigmoid activation by maximizing rank correlation between GREAT Scores and attack-based robustness rankings. This dependence on adversarial attacks undermines the computational advantages of the certified approach. We propose a fully attack-free alternative.

\textbf{Key insight.} There exists a well-established monotonic relationship between adversarial robustness and clean accuracy~\citep{tsipras2019robustnessoddsaccuracy}: models that are more robust tend to achieve higher clean accuracy within the same defense family. We exploit this relationship to calibrate $T$ using only publicly available clean accuracy values from RobustBench~\citep{croce2021robustbenchstandardizedadversarialrobustness}, which require no adversarial computation.

\begin{definition}[Accuracy-Correlation Self-Calibration]
\label{def:self_cal}
Given $M$ models $\{f_m\}_{m=1}^{M}$ with known clean accuracies $\{a_m\}_{m=1}^{M}$, the self-calibrated temperature is
\begin{equation}
    T^* = \arg\max_{T \in \mathcal{T}} \; \rho_s\!\left(\bigl\{\hat{\Omega}^{(T)}(f_m)\bigr\}_{m=1}^{M},\; \bigl\{a_m\bigr\}_{m=1}^{M}\right),
    \label{eq:self_calibration}
\end{equation}
where $\rho_s$ denotes Spearman's rank correlation coefficient~\citep{spearman1904proof} and $\mathcal{T}$ is the search space.
\end{definition}

\textbf{Optimization procedure.} We employ a two-phase grid search. The coarse phase evaluates temperatures in $[0.01, 10.0]$ with step size $0.1$. The fine phase refines the search in a neighborhood of the best coarse temperature with step size $0.001$. This procedure is computationally inexpensive since it only requires recomputing softmax/sigmoid outputs from cached logits (no additional forward passes).

\textbf{Ranking stability calibration.} As a complementary approach, we also consider a stability-based calibration that selects the temperature at which per-class score rankings are most stable under small perturbations:
\begin{equation}
    T^*_{\mathrm{stab}} = \arg\max_{T \in \mathcal{T}} \min_{T' \in [T - \delta_T, T + \delta_T]} \rho_s\!\left(\mathrm{rank}(\hat{\boldsymbol{\Omega}}^{(T)}),\; \mathrm{rank}(\hat{\boldsymbol{\Omega}}^{(T')})\right),
    \label{eq:stability_calibration}
\end{equation}
where $\delta_T$ is a small perturbation window and $\hat{\boldsymbol{\Omega}}^{(T)} = (\hat{\Omega}_1^{(T)}, \ldots, \hat{\Omega}_K^{(T)})$ is the vector of per-class scores at temperature $T$. This criterion ensures that the chosen temperature produces robust rankings that do not fluctuate with minor parameter changes.

\textbf{Complete pipeline.} Combining the three components, the GF-Score evaluation pipeline proceeds as follows: (1) collect logits via forward passes on test or generated samples; (2) partition samples by class and compute per-class GREAT Scores (\Cref{def:per_class}); (3) compute disparity metrics (Definitions~\ref{def:rdi}--\ref{def:fp_great}); (4) self-calibrate the temperature (\Cref{def:self_cal}) and recompute all scores at $T^*$. The entire pipeline requires only forward passes through the classifier, with no adversarial attack computation at any stage.

%
%
%
%
%
%
%
%

%% file: sections/experimental_setup.tex
\section{Experimental Setup and Theoretical Analysis}
\label{sec:exp_setup_theory}

\subsection{Setup}
\label{sec:exp_setup}

\textbf{Models and datasets.} We evaluate 22 adversarially robust models from RobustBench~\citep{croce2021robustbenchstandardizedadversarialrobustness}: 17 CIFAR-10~\citep{krizhevsky2009learning} models under the $\ell_2$ threat model and 5 ImageNet~\citep{russakovsky2015imagenet} models under $\ell_\infty$. CIFAR-10 models span diverse training methods including PGD-AT~\citep{madry2019deeplearningmodelsresistant}, TRADES~\citep{zhang2019trades}, AWP~\citep{wu2020adversarial}, data augmentation with DDPM~\citep{rebuffi2021fixing}, and MMA training~\citep{ding2020mma}. ImageNet models include adversarially trained ResNet and WideResNet variants~\citep{salman2020provablyrobustdeeplearning, wong2020fast}. We use the standard test sets (10K for CIFAR-10; 50K for ImageNet) with GAN-generated samples for GREAT Score computation following the protocol of \citet{li2024greatscoreglobalrobustness}.

\textbf{Implementation.} All evaluations use only forward passes with cached logits. Temperature is set to $T=1.0$ (uncalibrated) or $T^*$ (self-calibrated). We use sigmoid activation for CIFAR-10 and softmax for ImageNet, matching the original GREAT Score setup. The fairness penalty is set to $\lambda = 0.5$ for FP-GREAT. Full model lists and per-class breakdowns are in Appendix~\ref{app:full_results}.

\subsection{Theoretical Analysis}
\label{sec:theoretical}

We establish formal concentration guarantees for the per-class GREAT Scores and the derived disparity metrics.

\begin{proposition}[Per-Class Concentration Bound]
\label{prop:per_class_concentration}
Since $g(x) \in [0, \sqrt{\pi/2}]$ is bounded, Hoeffding's inequality~\citep{Hoeffding1963-sq} gives, for each class $k$ and any $\epsilon > 0$:
\begin{equation}
    \Pr\bigl[|\hat{\Omega}_k - \Omega_k| \geq \epsilon\bigr] \leq 2\exp\!\left(-\frac{2 n_k \epsilon^2}{\pi/2}\right).
    \label{eq:hoeffding_class}
\end{equation}
Setting $\delta_k = 2\exp(-2n_k\epsilon^2 / (\pi/2))$ and applying a union bound over $K$ classes, with probability at least $1 - \delta$, simultaneously for all $k$:
\begin{equation}
    |\hat{\Omega}_k - \Omega_k| \leq \sqrt{\frac{\pi \log(2K/\delta)}{4 n_k}}.
    \label{eq:concentration_all}
\end{equation}
\end{proposition}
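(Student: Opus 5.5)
The plan is to apply Hoeffding's inequality class by class, conditionally on the label assignment, and then combine the $K$ classes with a union bound in which each class receives an equal share $\delta/K$ of the failure probability. First I fix the population target: $\Omega_k := \mathbb{E}[g(x)\mid y=k]$, the class-conditional mean of the local score. I then condition on the vector of labels $(y_1,\dots,y_N)$, which fixes the partition $\mathcal{S}_k$ and the counts $n_k$. Under the i.i.d.\ sampling assumption of \Cref{sec:prelim}, the $n_k$ samples in $\mathcal{S}_k$ are, conditionally, i.i.d.\ draws from the class-$k$ conditional distribution. Hence $\hat{\Omega}_k$ in \eqref{eq:per_class_score} is an average of $n_k$ i.i.d.\ variables with conditional mean $\Omega_k$.

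Next, boundedness gives the range constant. By definition, $g(x)=\sqrt{\pi/2}\cdot\max\{\sigma(f(x))_y-\max_{j\neq y}\sigma(f(x))_j,\,0\}$. Each sigmoid or softmax output lies in $[0,1]$, so the difference is at most $1$, and therefore $g(x)\in[0,\sqrt{\pi/2}]$. The range is $b-a=\sqrt{\pi/2}$, so $(b-a)^2=\pi/2$. The two-sided Hoeffding inequality for an average of $n_k$ variables reads
\[
\Pr\bigl[|\hat{\Omega}_k-\Omega_k|\ge\epsilon\bigr]\le 2\exp\bigl(-2n_k\epsilon^2/(b-a)^2\bigr),
\]
which is exactly \eqref{eq:hoeffding_class}. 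It holds conditionally on the labels, for every realization with $n_k\ge 1$. Taking expectations over the label vector preserves the bound whenever it is stated in terms of the realized $n_k$.

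For the simultaneous statement, I allocate the failure probability uniformly, choosing a class-dependent radius $\epsilon_k$ so that $\delta_k=\delta/K$. This means solving
\[
2\exp\bigl(-2n_k\epsilon_k^2/(\pi/2)\bigr)=\delta/K,
\]
which gives $\epsilon_k^2=\tfrac{\pi/2}{2n_k}\log(2K/\delta)=\tfrac{\pi\log(2K/\delta)}{4n_k}$. This radius matches \eqref{eq:concentration_all}. The union bound over the $K$ events $\{|\hat{\Omega}_k-\Omega_k|\ge\epsilon_k\}$ then has total failure probability at most $\sum_k\delta/K=\delta$. So with probability at least $1-\delta$, the bound holds for all $k$ simultaneously.

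The computations are routine. The step I expect to need the most care is the conditioning: $n_k$ is random, and the per-class samples are i.i.d.\ only given the labels. I would handle this by proving everything conditionally on $(y_i)_{i=1}^N$, restricted to the event that every $n_k\ge1$ so the estimator is defined. I would then note that a conditional probability bound of at most $\delta$ for every label configuration implies the same unconditional bound.
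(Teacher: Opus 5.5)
Your proposal is correct and follows essentially the same route as the paper: you bound $g$ in $[0,\sqrt{\pi/2}]$ because the margin lies in $[0,1]$, apply Hoeffding per class with $(b-a)^2=\pi/2$, set $\delta_k=\delta/K$, invert to get $\epsilon_k^2=\pi\log(2K/\delta)/(4n_k)$, and union-bound over the $K$ classes. Your explicit conditioning on the label vector, which fixes the random $n_k$ and makes the within-class samples i.i.d., adds rigor at a point where the paper's proof simply asserts the i.i.d.\ property.
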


\begin{proof}
The local score $g(x)$ is bounded in $[0, \sqrt{\pi/2}]$ by construction (the confidence margin lies in $[0, 1]$ and is scaled by $\sqrt{\pi/2}$). For class $k$, the $n_k$ samples are i.i.d.\ draws from the class-conditional distribution, so Hoeffding's inequality applies directly with range parameter $b - a = \sqrt{\pi/2}$. Setting $\delta_k = \delta/K$ for each class and inverting for $\epsilon$ yields the stated bound. The union bound ensures all $K$ inequalities hold simultaneously.
\end{proof}

\textbf{Interpretation.} With $n_k = 1{,}000$ samples per class, $K = 10$ classes, and $\delta = 0.05$, the bound gives $|\hat{\Omega}_k - \Omega_k| \leq 0.069$ simultaneously for all classes.

\begin{proposition}[RDI Concentration Bound]
\label{prop:rdi_concentration}
Let $n_{\min} = \min_k n_k$. Under the conditions of \Cref{prop:per_class_concentration}, with probability at least $1 - \delta$:
\begin{equation}
    |\widehat{\mathrm{RDI}} - \mathrm{RDI}| \leq 2\sqrt{\frac{\pi \log(2K/\delta)}{4 n_{\min}}}.
    \label{eq:rdi_concentration}
\end{equation}
\end{proposition}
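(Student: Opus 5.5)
The plan is to work on the event $E$ from \Cref{prop:per_class_concentration} and show that, on $E$, the RDI functional is Lipschitz in the $\ell_\infty$ norm of the per-class score vector. By \Cref{prop:per_class_concentration}, with probability at least $1-\delta$, simultaneously for all $k$,
\[
|\hat{\Omega}_k - \Omega_k| \leq \epsilon_k := \sqrt{\frac{\pi \log(2K/\delta)}{4 n_k}}.
\]
Since $\epsilon_k$ is decreasing in $n_k$, every $\epsilon_k$ is at most $\epsilon_{\max} := \sqrt{\pi\log(2K/\delta)/(4 n_{\min})}$. On this event we therefore have $\|\hat{\boldsymbol{\Omega}} - \boldsymbol{\Omega}\|_\infty \leq \epsilon_{\max}$, and the rest of the argument is deterministic.

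Next, treat the maximum and minimum separately, using the elementary fact that coordinatewise max and min are 1-Lipschitz in $\ell_\infty$. For the maximum, let $k^\star$ attain $\max_k \hat{\Omega}_k$. Then
\[
\max_k \hat{\Omega}_k = \hat{\Omega}_{k^\star} \leq \Omega_{k^\star} + \epsilon_{\max} \leq \max_k \Omega_k + \epsilon_{\max}.
\]
The symmetric argument, starting from a maximizer of $\Omega$, gives the reverse inequality. Together these yield $|\max_k \hat{\Omega}_k - \max_k \Omega_k| \leq \epsilon_{\max}$. The same reasoning gives $|\min_k \hat{\Omega}_k - \min_k \Omega_k| \leq \epsilon_{\max}$.

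Finally, write
\[
\widehat{\mathrm{RDI}} - \mathrm{RDI} = \bigl(\max_k \hat{\Omega}_k - \max_k \Omega_k\bigr) - \bigl(\min_k \hat{\Omega}_k - \min_k \Omega_k\bigr).
\]
Applying the triangle inequality gives $|\widehat{\mathrm{RDI}} - \mathrm{RDI}| \leq 2\epsilon_{\max}$, which is exactly \eqref{eq:rdi_concentration}, holding with probability at least $1-\delta$.

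The only delicate step is the max/min Lipschitz argument. The maximizing index for $\hat{\boldsymbol{\Omega}}$ may differ from the one for $\boldsymbol{\Omega}$, so we cannot simply subtract matched coordinates. Handling the two one-sided inequalities separately, each time using the maximizer of the larger side, resolves this.

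No further union bound is needed, because \Cref{prop:per_class_concentration} already provides simultaneous control over all $K$ classes. Replacing each $n_k$ by $n_{\min}$ only loosens the bound. One could obtain a slightly sharper constant, for example $\epsilon_{k^\star}+\epsilon_{k_\star}$ with data-dependent indices, but the stated uniform form suffices.
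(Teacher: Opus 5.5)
Your proof is correct and follows the same route as the paper: work on the simultaneous event of Proposition~\ref{prop:per_class_concentration}, bound $|\widehat{\mathrm{RDI}}-\mathrm{RDI}|$ by $2\max_k|\hat{\Omega}_k-\Omega_k|$, and use $n_k \geq n_{\min}$. Your two-sided handling of the max and min is more careful than the paper's, whose intermediate inequality compares coordinates only at the population extremizers $k^*, k_*$ and fails when an empirical extremizer is a different class (e.g.\ population scores $(1,0.9,0)$ and empirical scores $(1,1.2,0)$ give $|\widehat{\mathrm{RDI}}-\mathrm{RDI}|=0.2$ while that right-hand side is $0$), although the paper's final bound $2\max_k|\hat{\Omega}_k-\Omega_k|$ is valid.
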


\begin{proof}
The empirical RDI is $\widehat{\mathrm{RDI}} = \max_k \hat{\Omega}_k - \min_k \hat{\Omega}_k$, and the population RDI is $\mathrm{RDI} = \max_k \Omega_k - \min_k \Omega_k$. By the triangle inequality:
\begin{align}
    |\widehat{\mathrm{RDI}} - \mathrm{RDI}| &\leq |\hat{\Omega}_{k^*} - \Omega_{k^*}| + |\hat{\Omega}_{k_*} - \Omega_{k_*}| \nonumber \\
    &\leq 2 \max_k |\hat{\Omega}_k - \Omega_k|,
    \label{eq:rdi_triangle}
\end{align}
where $k^* = \arg\max_k \Omega_k$ and $k_* = \arg\min_k \Omega_k$. Applying \Cref{prop:per_class_concentration} with $n_k \geq n_{\min}$ for all $k$ completes the proof.
\end{proof}

\subsection{Decomposition Consistency}
\label{sec:exp_decomposition}

The weighted sum $\sum_k (n_k/N)\hat{\Omega}_k$ recovers the aggregate $\hat{\Omega}$ with \textbf{zero numerical error} across all 22 models on both datasets, confirming the mathematical identity in \Cref{eq:decomposition}. This also verifies our implementation: the per-class partition is exhaustive and non-overlapping.

%

%% file: sections/results.tex
\section{Results}
\label{sec:results}

\subsection{Self-Calibration and Ranking Fidelity}
\label{sec:exp_calibration}

\Cref{tab:calibration} reports rank correlations with RobustBench. On CIFAR-10, our uncalibrated scores achieve $\rho = 0.662$, matching the original paper's reported value of $0.6618$. Self-calibration at $T^* = 2.70$ improves this to $\rho = 0.871$. On ImageNet, where the original paper reported $\rho = 0.800$ without calibration, our uncalibrated scores reach $\rho = 0.900$, and calibration at $T^* = 0.10$ yields $\rho = 1.000$ (perfect ranking). The entire calibration uses only clean accuracies with no adversarial attacks.

\begin{table}[t]
    \caption{Spearman rank correlation ($\rho$) with RobustBench accuracy rankings. Self-calibration matches or exceeds the original attack-based calibration while being fully attack-free.}
    \label{tab:calibration}
    \centering
    \small
    \begin{tabular}{@{}lcccc@{}}
        \toprule
        & \multicolumn{2}{c}{\textbf{CIFAR-10} ($\ell_2$, 17 models)} & \multicolumn{2}{c}{\textbf{ImageNet} ($\ell_\infty$, 5 models)} \\
        \cmidrule(lr){2-3} \cmidrule(lr){4-5}
        & Uncalibrated & Calibrated & Uncalibrated & Calibrated \\
        \midrule
        Original~\citep{li2024greatscoreglobalrobustness} & 0.662 & 0.897\textsuperscript{$\dagger$} & 0.800 & ---\textsuperscript{$\ddagger$} \\
        GF-Score (Ours) & 0.662 & \textbf{0.871} & 0.900 & \textbf{1.000} \\
        \bottomrule
        \multicolumn{5}{@{}l}{\textsuperscript{$\dagger$}\footnotesize Uses C\&W attack for calibration; ours is fully attack-free.} \\
        \multicolumn{5}{@{}l}{\textsuperscript{$\ddagger$}\footnotesize Calibration not performed for ImageNet in the original paper.}
    \end{tabular}
\end{table}

\subsection{Per-Class Robustness Disparity}
\label{sec:exp_disparity}

\textbf{Class vulnerability patterns.} The heatmap in \Cref{fig:heatmap} reveals striking structure. On CIFAR-10, ``cat'' is the most vulnerable class in 13 of 17 models (76\%), while ``automobile'' is the most robust in 10 of 17 (59\%). The remaining worst cases are ``dog'' (4 models) and best cases are ``horse'' (5) and ``truck'' (2). This consistency across diverse training methods suggests that class vulnerability is driven by intrinsic data properties (visual similarity between cats and dogs, distinctiveness of vehicles) rather than training artifacts.

\textbf{Robustness-fairness tension.} \Cref{fig:pareto} plots aggregate GREAT Score against RDI. We observe a clear positive correlation: models with higher aggregate robustness tend to exhibit \emph{greater} class-level disparity. On CIFAR-10, RDI ranges from 0.11 (Wu2020, most fair) to 0.43 (Augustin2020, most disparate), with the most robust models clustered in the high-RDI region. ImageNet shows the same pattern, with RDI between 1.13 and 1.23. Two ImageNet models (Salman\_R18 and Wong2020) have $\mathrm{WCR} = 0.000$, meaning at least one class receives zero certified robustness despite positive aggregate scores. This finding provides new quantitative evidence for the robustness-fairness tension identified qualitatively by \citet{benz2021robustnessoddsfairnessempirical} and \citet{xu2021robustfairfairnessadversarial}.

\begin{figure}[t]
    \centering
    \subfloat[CIFAR-10]{%
        \includegraphics[width=0.48\linewidth]{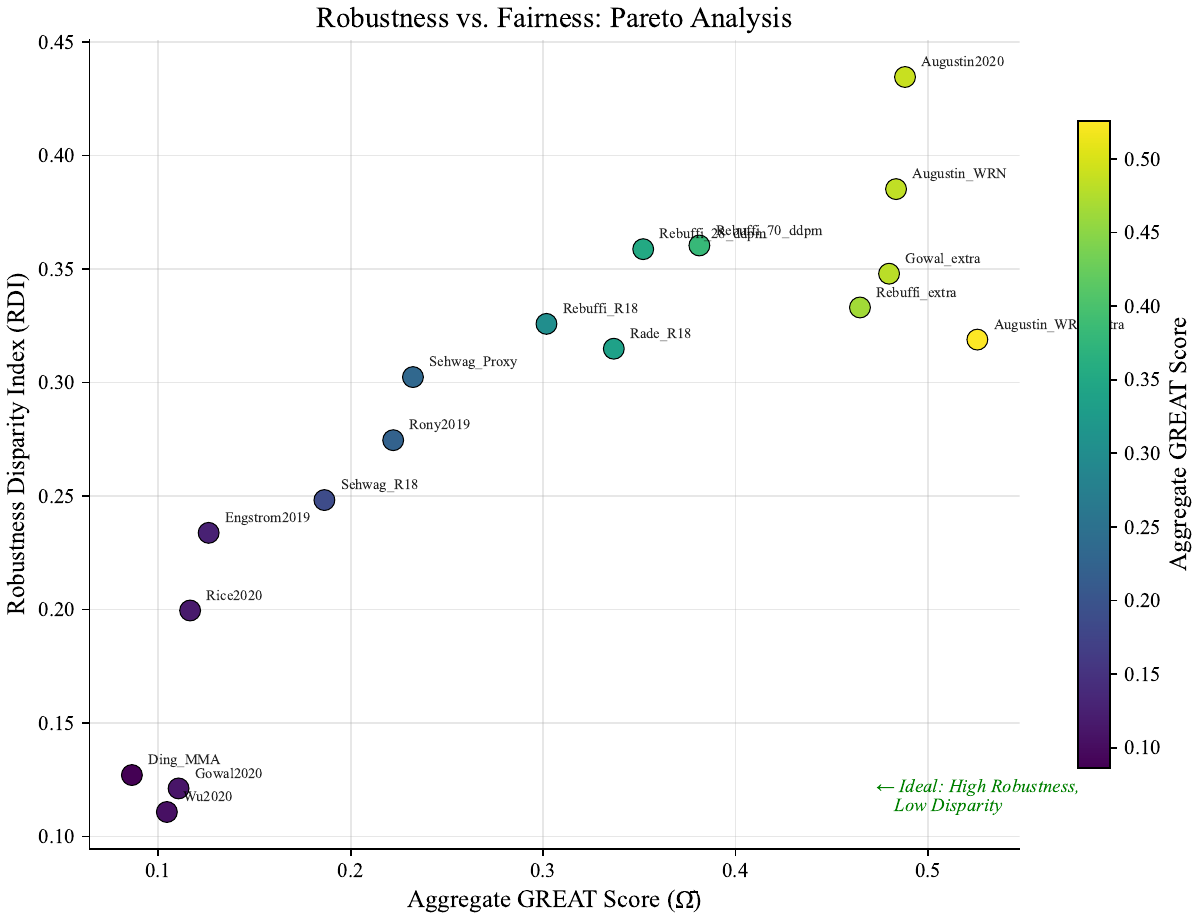}%
    }\hfill
    \subfloat[ImageNet]{%
        \includegraphics[width=0.48\linewidth]{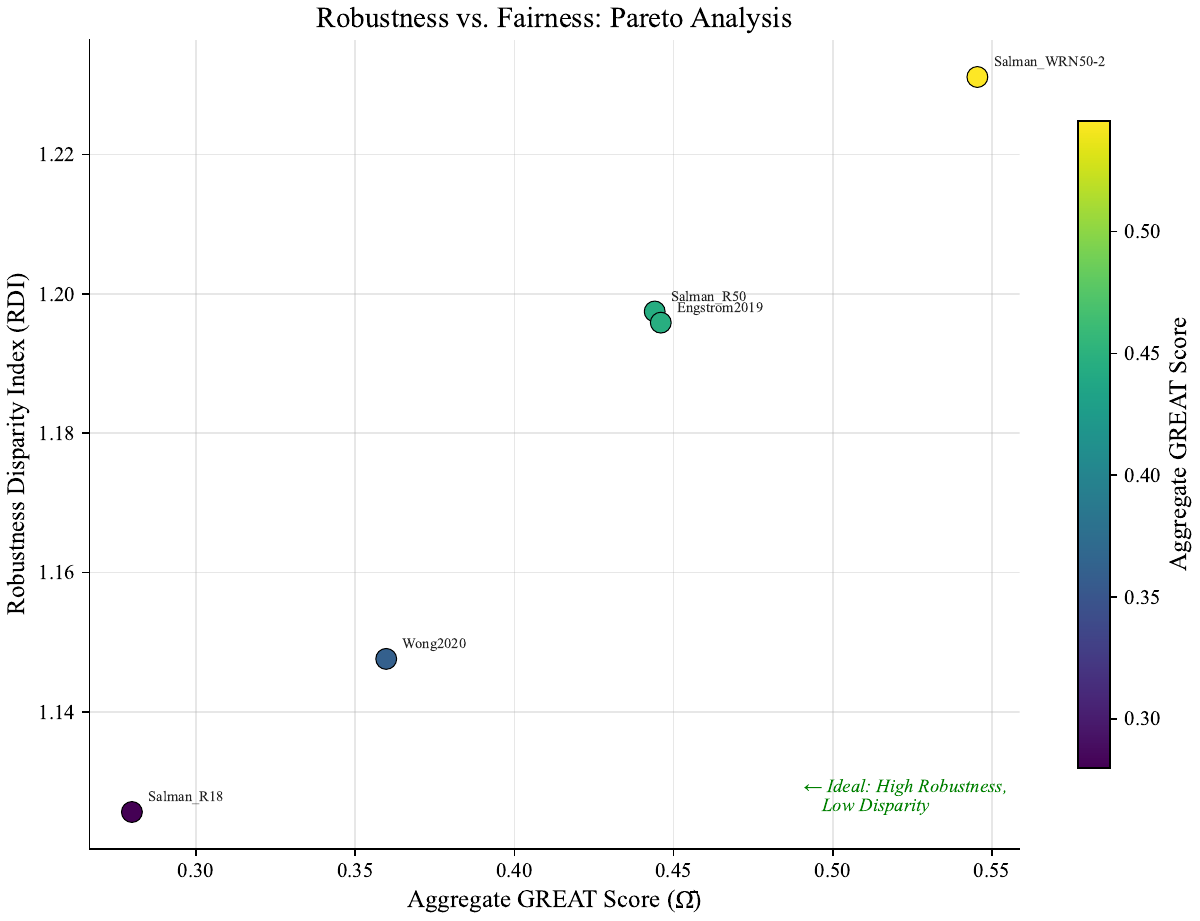}%
    }
    \caption{Aggregate GREAT Score vs.\ RDI. Higher robustness correlates with greater class-level disparity on both datasets, revealing a quantifiable robustness-fairness tension.}
    \label{fig:pareto}
\end{figure}

\textbf{FP-GREAT re-ranking.} When models are ranked by FP-GREAT ($\lambda = 0.5$) instead of the aggregate score, rankings shift substantially. For instance, on CIFAR-10, Augustin2020 drops from 2nd (by aggregate) to 5th (by FP-GREAT) due to the highest disparity ($\mathrm{RDI} = 0.43$), while Wu2020 rises from 16th to 14th due to its low disparity ($\mathrm{RDI} = 0.11$). This demonstrates that fairness-aware ranking provides a meaningfully different and more nuanced view of model quality. Full FP-GREAT rankings and additional figures (disparity bars, vulnerability analysis, calibration curves, RDI concentration plots) are provided in Appendix~\ref{app:figures}.

%

%% file: sections/discussion.tex
\section{Discussion and Limitations}
\label{sec:discussion}

\textbf{Implications.} The GF-Score reveals that aggregate robustness metrics systematically hide class-level disparities that matter for deployment. The consistent vulnerability of ``cat'' across 76\% of CIFAR-10 models, and the existence of ImageNet models with $\mathrm{WCR} = 0$ (zero certified robustness on at least one class), demonstrate that passing an aggregate robustness audit provides no guarantee for individual classes. The positive correlation between aggregate robustness and RDI suggests that improving overall robustness through current training methods may inadvertently worsen fairness, reinforcing the tension identified by \citet{xu2021robustfairfairnessadversarial} and \citet{benz2021robustnessoddsfairnessempirical} with new certified evidence. Our framework enables practitioners to detect such issues post-hoc, without retraining, making it complementary to training-time fairness interventions~\citep{wei2023cfaclasswisecalibratedfair, zhang2024fairnessawareadversariallearning}.

\textbf{Limitations.} Our framework inherits the assumptions of the GREAT Score: (1) the generative model must approximate the true data distribution well enough for the certified bound to be meaningful; (2) the local score's certification relies on the confidence margin, which may be loose for models with poorly calibrated softmax outputs. The self-calibration procedure assumes a monotonic relationship between robustness and clean accuracy, which holds within a defense family but may not hold across fundamentally different architectures. Our ImageNet evaluation covers only 5 models due to computational constraints on logit extraction for 50K images across 1000 classes; scaling to more models would strengthen the findings. Finally, while we provide concentration bounds for RDI, tighter bounds for NRGC and WCR remain open theoretical questions.

\textbf{Conclusion.} We introduced the GF-Score, a framework that decomposes the certified GREAT Score into per-class robustness profiles and quantifies their disparity through four metrics grounded in welfare economics. The decomposition is provably exact and comes with finite-sample concentration guarantees. Our attack-free self-calibration eliminates the need for adversarial attacks entirely, achieving rank correlations of $\rho = 0.871$ on CIFAR-10 and $\rho = 1.000$ on ImageNet with RobustBench. Evaluating 22 models across two benchmarks, we find that class vulnerability is remarkably consistent (``cat'' is worst in 76\% of CIFAR-10 models) and that more robust models exhibit greater class-level disparity. These findings demonstrate that aggregate robustness scores are insufficient for safety-critical deployment and that post-hoc fairness auditing of robustness is both feasible and necessary. We release an interactive auditing dashboard alongside our evaluation code to support adoption by practitioners and researchers.

%% file: sections/appendix.tex
\section{Full Experimental Results}
\label{app:full_results}

This appendix provides complete numerical results for all 22 models evaluated in our experiments. \Cref{tab:cifar_full} reports the full CIFAR-10 results and \Cref{tab:imagenet_full} reports the ImageNet results. \Cref{tab:per_class_cifar} provides the per-class GREAT Score breakdown for all CIFAR-10 models.

\begin{table}[H]
    \caption{Full GF-Score results for 17 CIFAR-10 $\ell_2$ models, sorted by RobustBench accuracy. \textbf{RB Acc}: RobustBench robust accuracy (\%). \textbf{GS}: uncalibrated GREAT Score. \textbf{Cal.\ GS}: calibrated GREAT Score ($T^* = 2.70$). \textbf{FP-GR}: Fairness-Penalized GREAT ($\lambda = 0.5$).}
    \label{tab:cifar_full}
    \centering
    \scriptsize
    \begin{tabular}{@{}lrrrrrrlr@{}}
        \toprule
        \textbf{Model} & \textbf{RB Acc} & \textbf{GS} & \textbf{Cal.\ GS} & \textbf{RDI} & \textbf{NRGC} & \textbf{WCR} & \textbf{WCR Class} & \textbf{FP-GR} \\
        \midrule
        Rebuffi\_extra     & 82.32 & 0.465 & 0.330 & 0.333 & 0.135 & 0.283 & cat & 0.298 \\
        Gowal\_extra       & 80.53 & 0.480 & 0.344 & 0.348 & 0.138 & 0.288 & cat & 0.306 \\
        Rebuffi\_70\_ddpm  & 80.42 & 0.381 & 0.277 & 0.360 & 0.178 & 0.166 & cat & 0.201 \\
        Augustin\_WRN\_ext & 78.79 & 0.526 & 0.330 & 0.319 & 0.105 & 0.335 & cat & 0.366 \\
        Rebuffi\_28\_ddpm  & 78.80 & 0.352 & 0.255 & 0.359 & 0.191 & 0.144 & cat & 0.173 \\
        Sehwag\_Proxy      & 77.24 & 0.232 & 0.290 & 0.302 & 0.250 & 0.060 & cat & 0.081 \\
        Augustin\_WRN      & 76.25 & 0.483 & 0.309 & 0.385 & 0.135 & 0.242 & cat & 0.291 \\
        Rade\_R18          & 76.15 & 0.337 & 0.256 & 0.315 & 0.177 & 0.157 & cat & 0.179 \\
        Rebuffi\_R18       & 75.86 & 0.302 & 0.220 & 0.326 & 0.193 & 0.121 & cat & 0.139 \\
        Gowal2020          & 74.50 & 0.111 & 0.207 & 0.121 & 0.192 & 0.046 & dog & 0.050 \\
        Sehwag\_R18        & 74.41 & 0.186 & 0.257 & 0.248 & 0.258 & 0.054 & cat & 0.062 \\
        Wu2020             & 73.66 & 0.105 & 0.173 & 0.111 & 0.194 & 0.047 & dog & 0.049 \\
        Augustin2020       & 72.91 & 0.488 & 0.304 & 0.435 & 0.142 & 0.218 & cat & 0.271 \\
        Engstrom2019       & 69.24 & 0.126 & 0.252 & 0.234 & 0.327 & 0.024 & dog & 0.009 \\
        Rice2020           & 67.68 & 0.117 & 0.212 & 0.200 & 0.309 & 0.031 & dog & 0.017 \\
        Rony2019           & 66.44 & 0.222 & 0.324 & 0.275 & 0.225 & 0.096 & cat & 0.085 \\
        Ding\_MMA          & 66.09 & 0.086 & 0.235 & 0.127 & 0.218 & 0.039 & cat & 0.023 \\
        \bottomrule
    \end{tabular}
\end{table}

\begin{table}[H]
    \caption{Full GF-Score results for 5 ImageNet $\ell_\infty$ models, sorted by RobustBench accuracy. Calibrated at $T^* = 0.10$. Two models achieve $\mathrm{WCR} = 0.000$, indicating at least one class with zero certified robustness.}
    \label{tab:imagenet_full}
    \centering
    \small
    \begin{tabular}{@{}lrrrrrrlr@{}}
        \toprule
        \textbf{Model} & \textbf{RB Acc} & \textbf{GS} & \textbf{Cal.\ GS} & \textbf{RDI} & \textbf{NRGC} & \textbf{WCR} & \textbf{WCR Class} & \textbf{FP-GR} \\
        \midrule
        Salman\_WRN50-2 & 38.14 & 0.545 & 0.800 & 1.231 & 0.299 & 0.009 & n01756291 & $-$0.070 \\
        Salman\_R50     & 34.96 & 0.444 & 0.730 & 1.198 & 0.350 & 0.003 & n04525038 & $-$0.155 \\
        Engstrom2019    & 29.22 & 0.446 & 0.717 & 1.196 & 0.361 & 0.003 & n03710637 & $-$0.152 \\
        Wong2020        & 26.24 & 0.360 & 0.591 & 1.148 & 0.388 & 0.000 & n04525038 & $-$0.214 \\
        Salman\_R18     & 25.32 & 0.280 & 0.575 & 1.126 & 0.454 & 0.000 & n04525038 & $-$0.283 \\
        \bottomrule
    \end{tabular}
\end{table}

\begin{table}[H]
    \caption{Per-class GREAT Scores for all 17 CIFAR-10 $\ell_2$ models (uncalibrated, $T = 1.0$). The class ``cat'' (column 4) is consistently the lowest-scoring class, while ``automobile'' (column 2) is consistently the highest. The final column shows the aggregate score, which equals the weighted mean of per-class scores with zero numerical error.}
    \label{tab:per_class_cifar}
    \centering
    \scriptsize
    \setlength{\tabcolsep}{3pt}
    \begin{tabular}{@{}lcccccccccc|c@{}}
        \toprule
        \textbf{Model} & \rotatebox{70}{airplane} & \rotatebox{70}{auto} & \rotatebox{70}{bird} & \rotatebox{70}{cat} & \rotatebox{70}{deer} & \rotatebox{70}{dog} & \rotatebox{70}{frog} & \rotatebox{70}{horse} & \rotatebox{70}{ship} & \rotatebox{70}{truck} & \rotatebox{70}{Agg.} \\
        \midrule
        Aug.\_WRN\_ext & .579 & .654 & .463 & .335 & .430 & .444 & .518 & .598 & .613 & .621 & .526 \\
        Aug.\_WRN      & .486 & .628 & .435 & .242 & .404 & .346 & .548 & .577 & .584 & .585 & .483 \\
        Aug.2020       & .526 & .652 & .440 & .218 & .419 & .336 & .537 & .582 & .584 & .588 & .488 \\
        Ding\_MMA      & .084 & .128 & .074 & .039 & .064 & .047 & .090 & .166 & .090 & .080 & .086 \\
        Engstrom       & .115 & .232 & .092 & .038 & .077 & .024 & .102 & .168 & .158 & .258 & .126 \\
        Gowal2020      & .112 & .122 & .082 & .061 & .098 & .046 & .118 & .167 & .153 & .146 & .111 \\
        Gowal\_ext     & .549 & .636 & .391 & .288 & .349 & .368 & .459 & .591 & .570 & .598 & .480 \\
        Rade\_R18      & .373 & .472 & .244 & .157 & .238 & .211 & .375 & .420 & .437 & .441 & .337 \\
        Reb.\_28\_ddpm & .391 & .502 & .265 & .144 & .248 & .201 & .396 & .445 & .461 & .468 & .352 \\
        Reb.\_70\_ddpm & .414 & .527 & .302 & .166 & .282 & .221 & .430 & .478 & .493 & .499 & .381 \\
        Reb.\_extra    & .528 & .616 & .373 & .283 & .341 & .368 & .441 & .562 & .553 & .582 & .465 \\
        Reb.\_R18      & .324 & .447 & .227 & .121 & .215 & .176 & .343 & .387 & .398 & .379 & .302 \\
        Rice2020       & .107 & .200 & .074 & .031 & .079 & .031 & .107 & .151 & .156 & .231 & .117 \\
        Rony2019       & .212 & .370 & .193 & .096 & .107 & .129 & .232 & .298 & .306 & .278 & .222 \\
        Sehwag\_Proxy  & .227 & .277 & .201 & .060 & .154 & .072 & .341 & .363 & .313 & .317 & .232 \\
        Sehwag\_R18    & .197 & .233 & .129 & .054 & .115 & .061 & .261 & .302 & .272 & .240 & .186 \\
        Wu2020         & .104 & .134 & .078 & .062 & .091 & .047 & .097 & .158 & .116 & .158 & .105 \\
        \bottomrule
    \end{tabular}
\end{table}

\section{Additional Figures}
\label{app:figures}

This section presents additional visualizations referenced in the main text. All figures are generated from the same evaluation data reported in \Cref{app:full_results}.

\subsection{Disparity Bar Charts}
\label{app:disparity_bars}

\Cref{fig:disparity_bars} shows per-model RDI, NRGC, WCR, and FP-GREAT values as bar charts, providing an intuitive comparison of disparity across models within each dataset.

\begin{figure}[H]
    \centering
    \subfloat[CIFAR-10 ($\ell_2$, 17 models)]{%
        \includegraphics[width=0.48\linewidth]{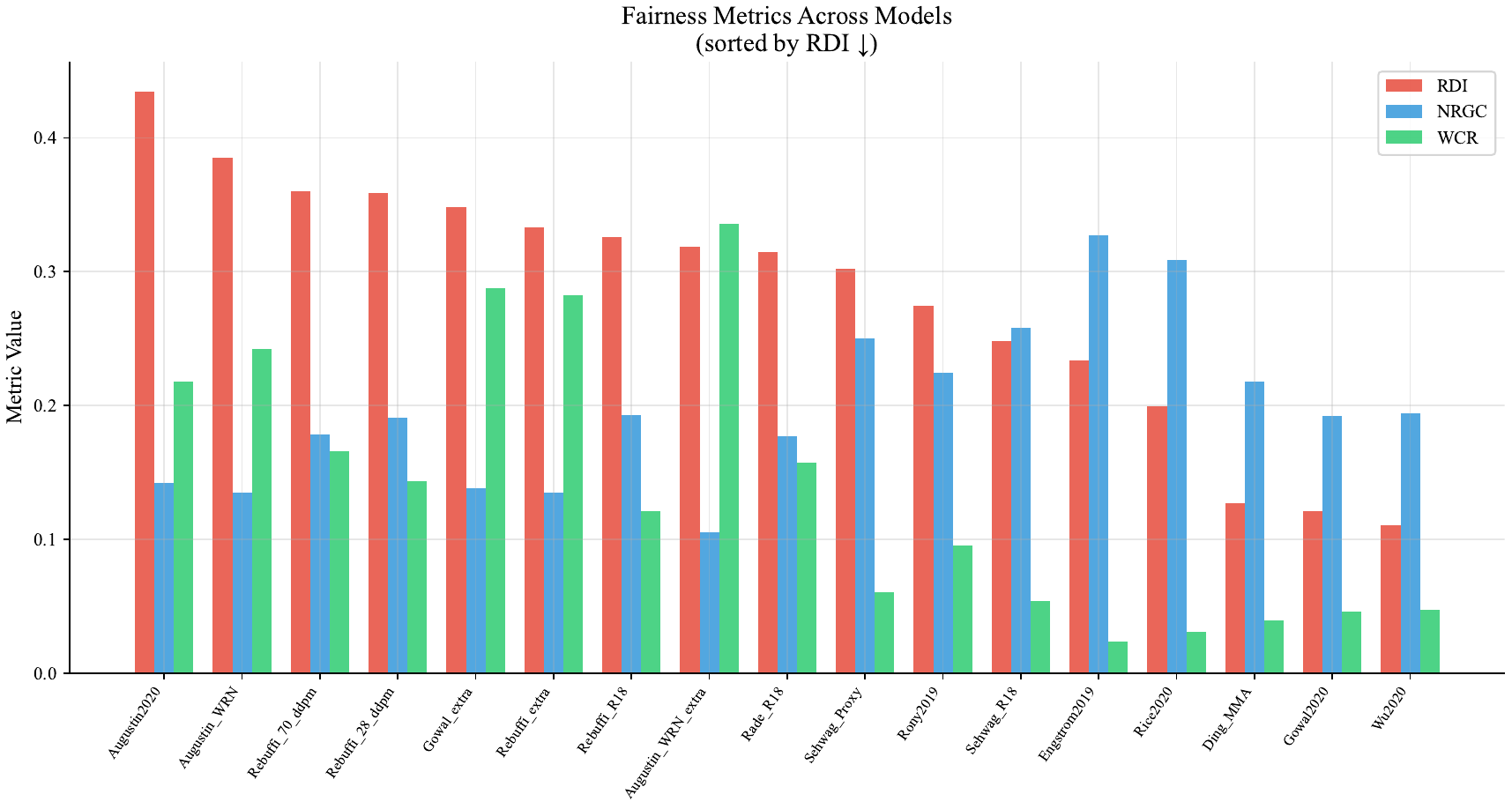}%
    }\hfill
    \subfloat[ImageNet ($\ell_\infty$, 5 models)]{%
        \includegraphics[width=0.48\linewidth]{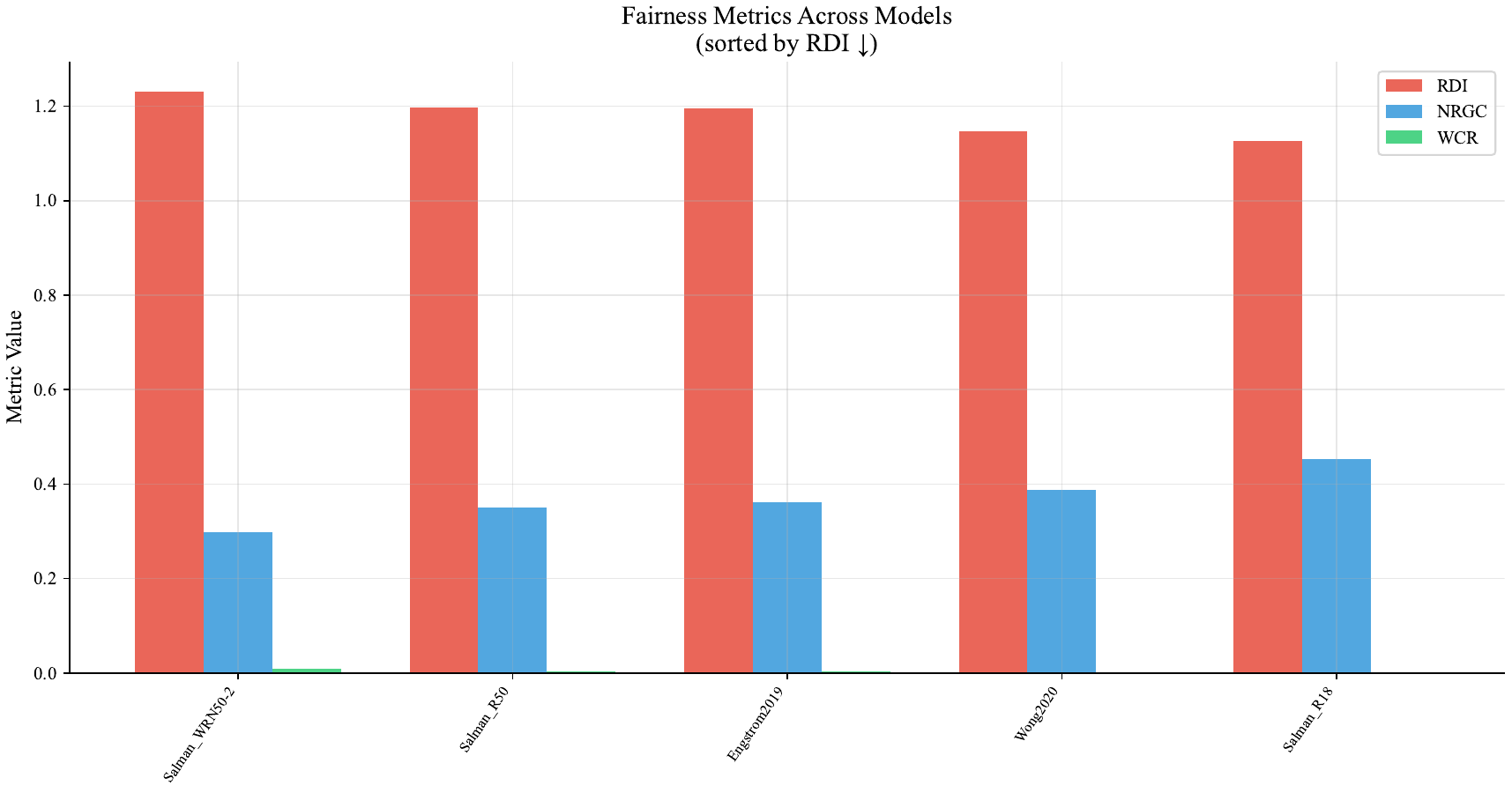}%
    }
    \caption{Disparity metric bar charts. Models are sorted by aggregate GREAT Score. Higher RDI and NRGC indicate greater disparity; lower WCR indicates worse fairness. All ImageNet models exhibit high RDI ($> 1.1$) due to the extreme class diversity (1{,}000 classes).}
    \label{fig:disparity_bars}
\end{figure}

\subsection{Vulnerability Analysis}
\label{app:vulnerability}

\Cref{fig:vulnerability} visualizes which classes are most and least robust across all models.

\begin{figure}[H]
    \centering
    \subfloat[CIFAR-10]{%
        \includegraphics[width=0.48\linewidth]{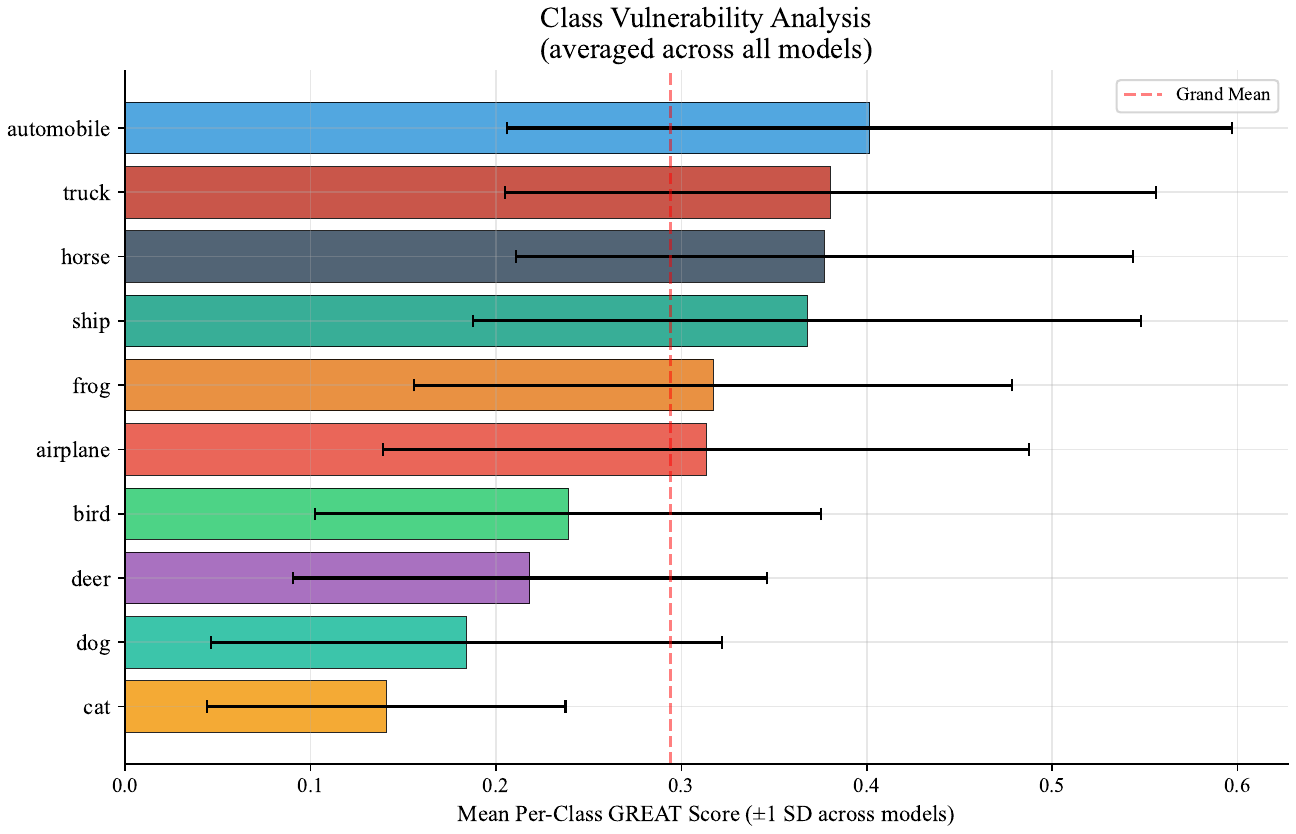}%
    }\hfill
    \subfloat[ImageNet]{%
        \includegraphics[width=0.48\linewidth]{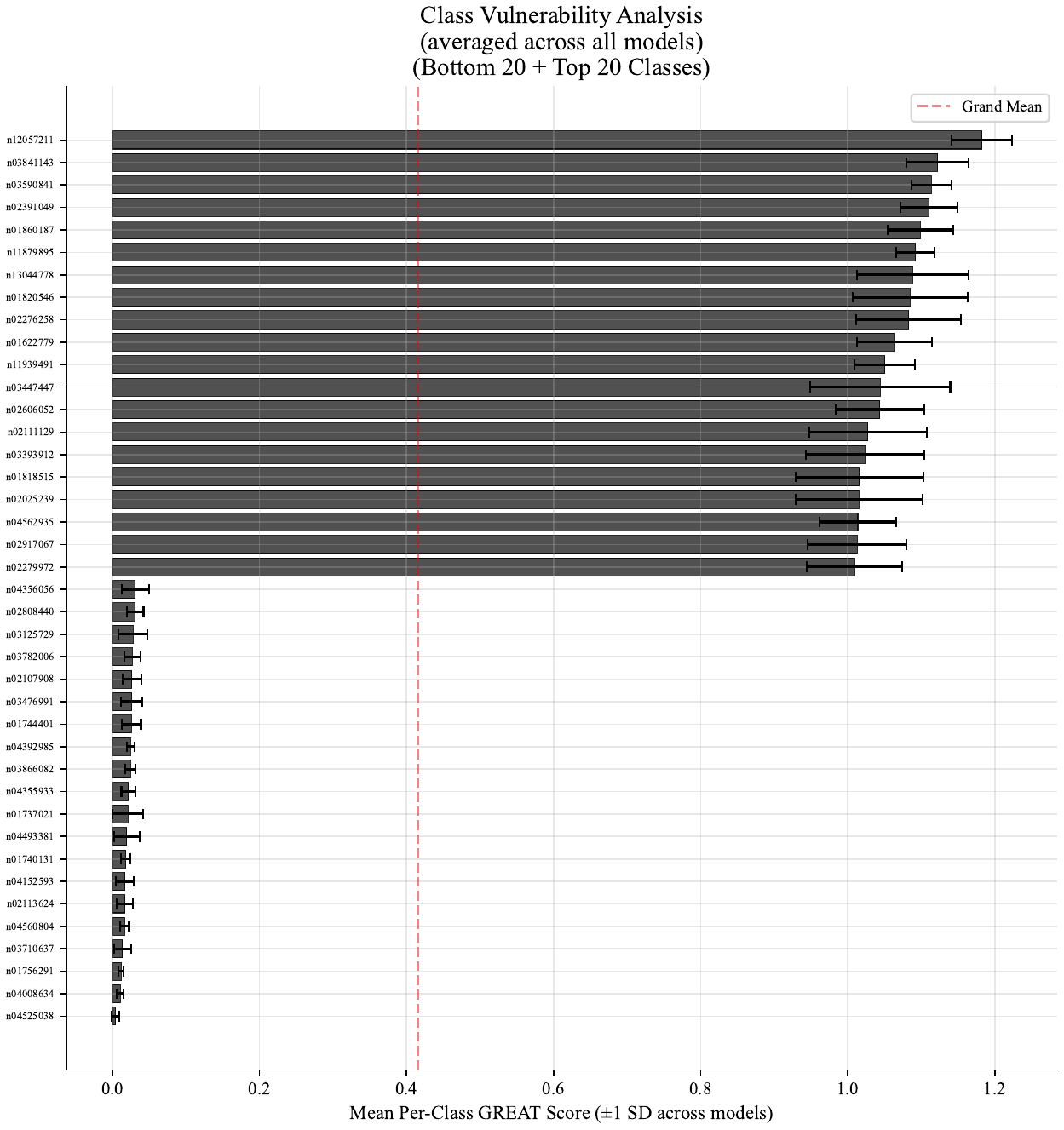}%
    }
    \caption{Class vulnerability analysis. On CIFAR-10, ``cat'' appears as the worst-case class in 13 of 17 models (76\%), while ``automobile'' is the best class in 10 of 17 (59\%). On ImageNet, ``n04525038'' (viaduct) is worst in 3 of 5 models, while ``n12057211'' (cattail) is best in all 5.}
    \label{fig:vulnerability}
\end{figure}

\subsection{FP-GREAT Rankings}
\label{app:fp_great_ranking}

\Cref{fig:fp_ranking} shows how model rankings change when using FP-GREAT ($\lambda = 0.5$) instead of the aggregate GREAT Score. Models with high disparity are penalized and drop in the fairness-aware ranking.

\begin{figure}[H]
    \centering
    \subfloat[CIFAR-10]{%
        \includegraphics[width=0.48\linewidth]{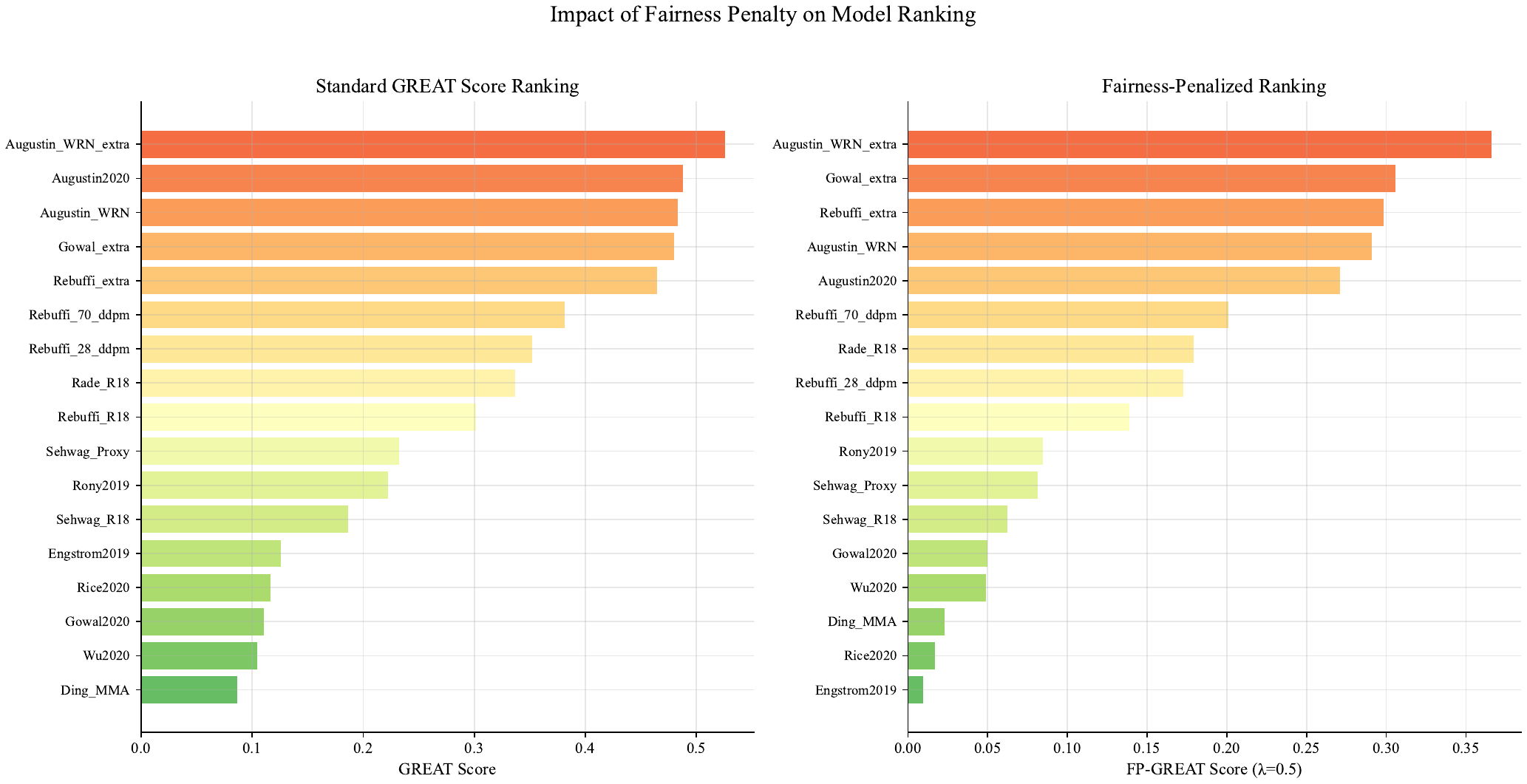}%
    }\hfill
    \subfloat[ImageNet]{%
        \includegraphics[width=0.48\linewidth]{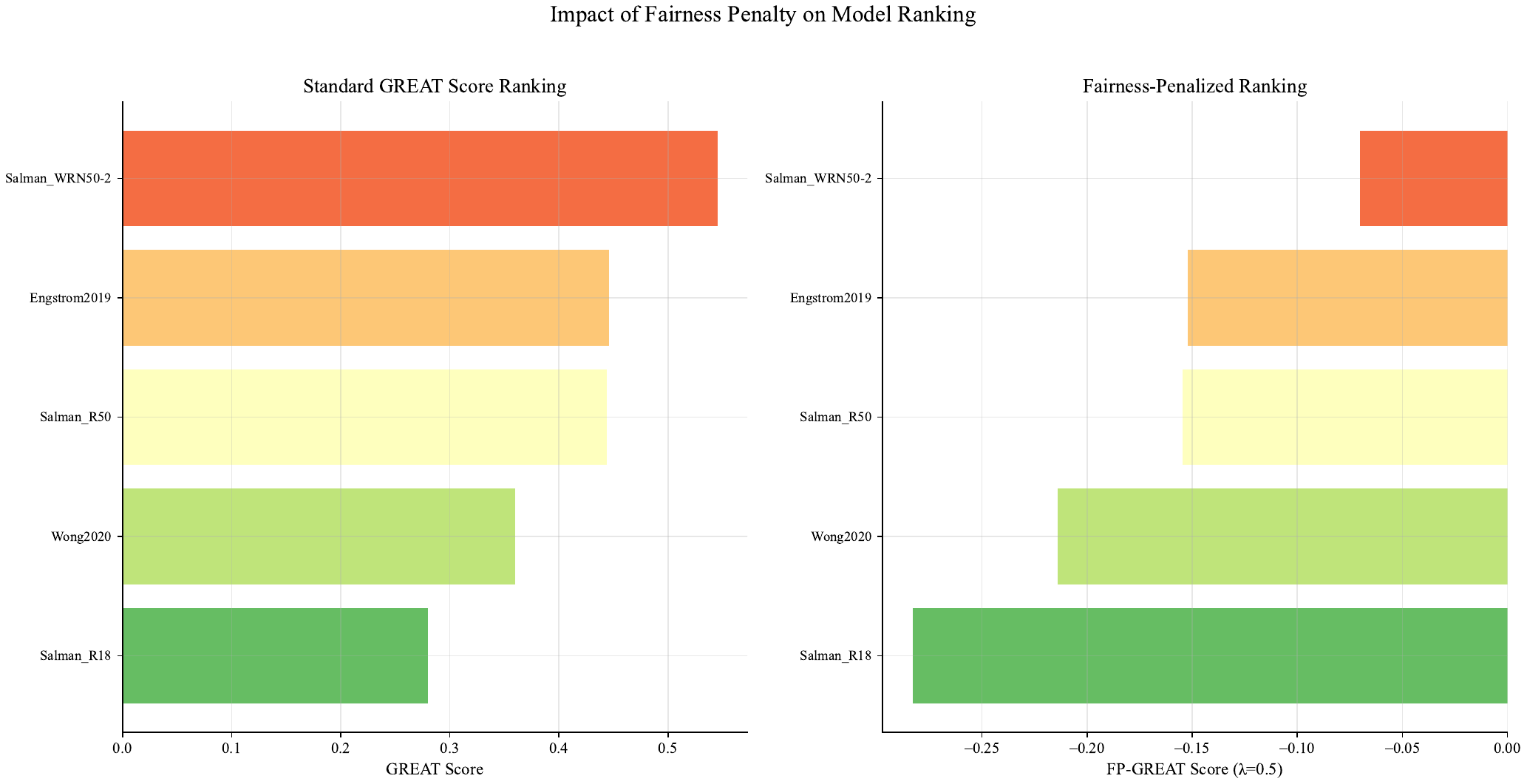}%
    }
    \caption{FP-GREAT re-ranking. On CIFAR-10, models with low disparity (e.g., Wu2020) rise, while those with high disparity (e.g., Augustin2020) fall. On ImageNet, the high RDI values ($> 1.1$) cause all FP-GREAT scores to be negative at $\lambda = 0.5$.}
    \label{fig:fp_ranking}
\end{figure}

\subsection{Self-Calibration Curves}
\label{app:calibration_curves}

\Cref{fig:calibration} plots the Spearman rank correlation as a function of temperature $T$ during the self-calibration grid search.

\begin{figure}[H]
    \centering
    \subfloat[CIFAR-10 ($T^* = 2.70$, $\rho = 0.871$)]{%
        \includegraphics[width=0.48\linewidth]{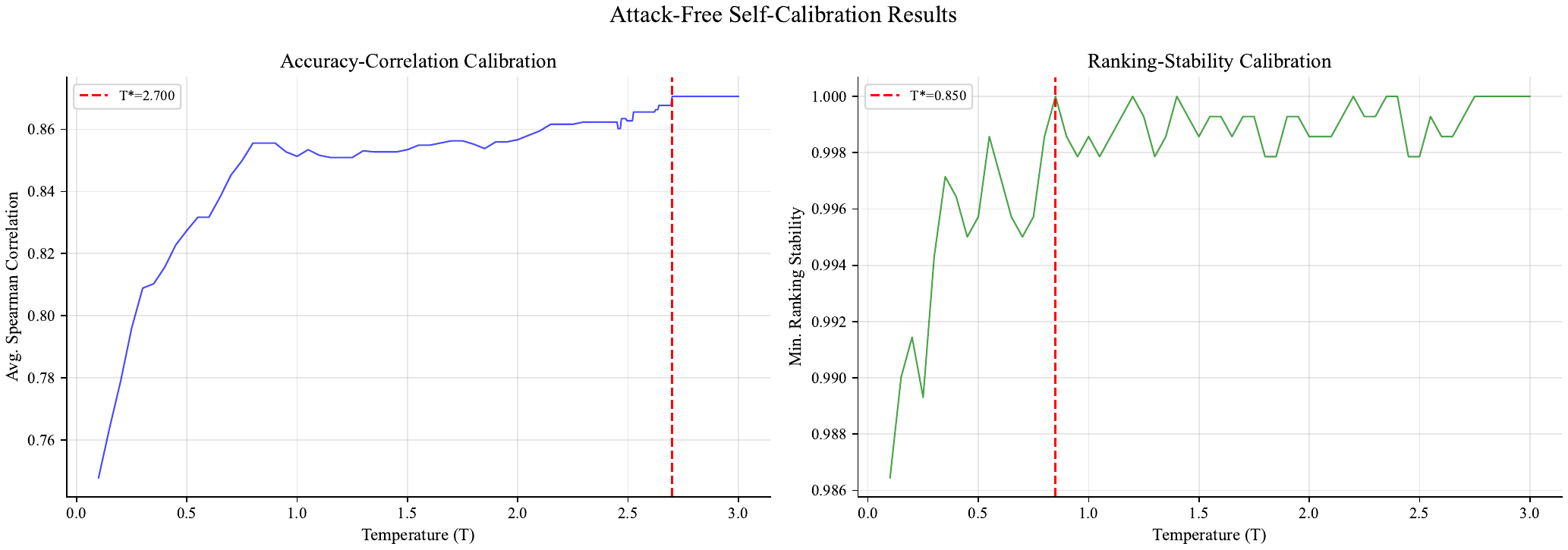}%
    }\hfill
    \subfloat[ImageNet ($T^* = 0.10$, $\rho = 1.000$)]{%
        \includegraphics[width=0.48\linewidth]{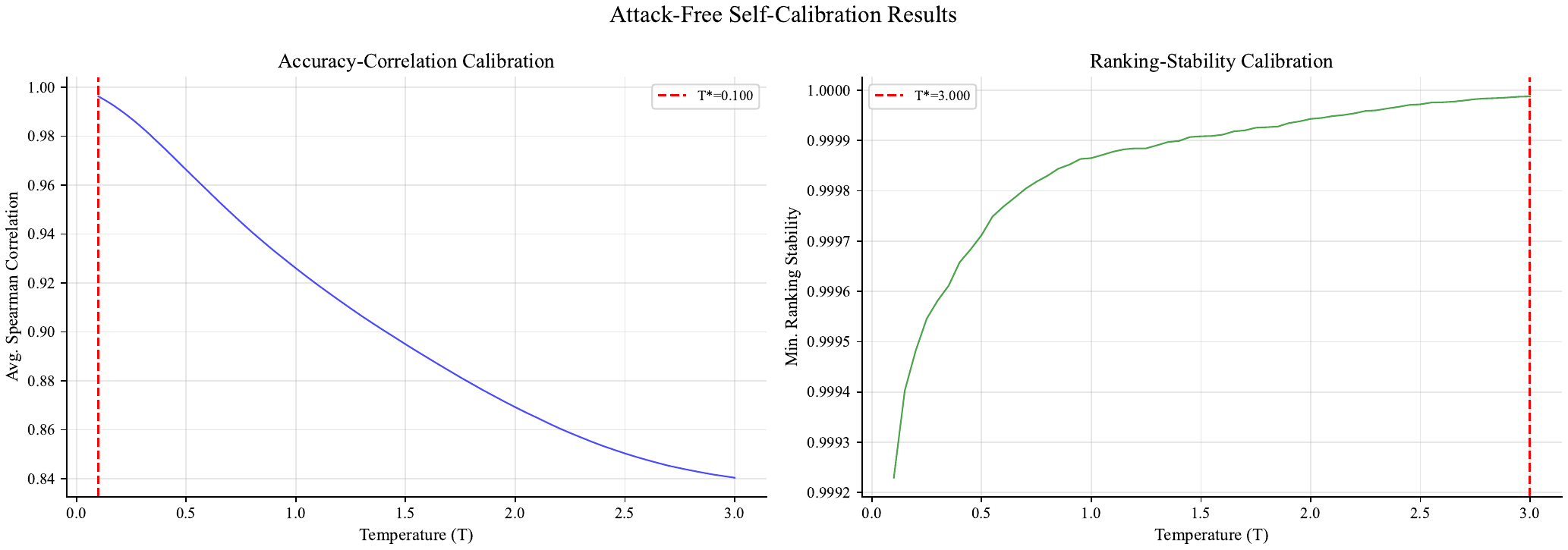}%
    }
    \caption{Self-calibration curves. Both curves are smooth, confirming that the two-phase grid search reliably finds the optimum. On ImageNet, perfect agreement with RobustBench rankings is achieved using only clean accuracies.}
    \label{fig:calibration}
\end{figure}

\subsection{RDI Concentration Bounds}
\label{app:rdi_concentration}

\Cref{fig:rdi_concentration} visualizes the RDI concentration bound from \Cref{prop:rdi_concentration} as a function of per-class sample size $n_k$.

\begin{figure}[H]
    \centering
    \subfloat[CIFAR-10 ($K = 10$, $\delta = 0.05$)]{%
        \includegraphics[width=0.48\linewidth]{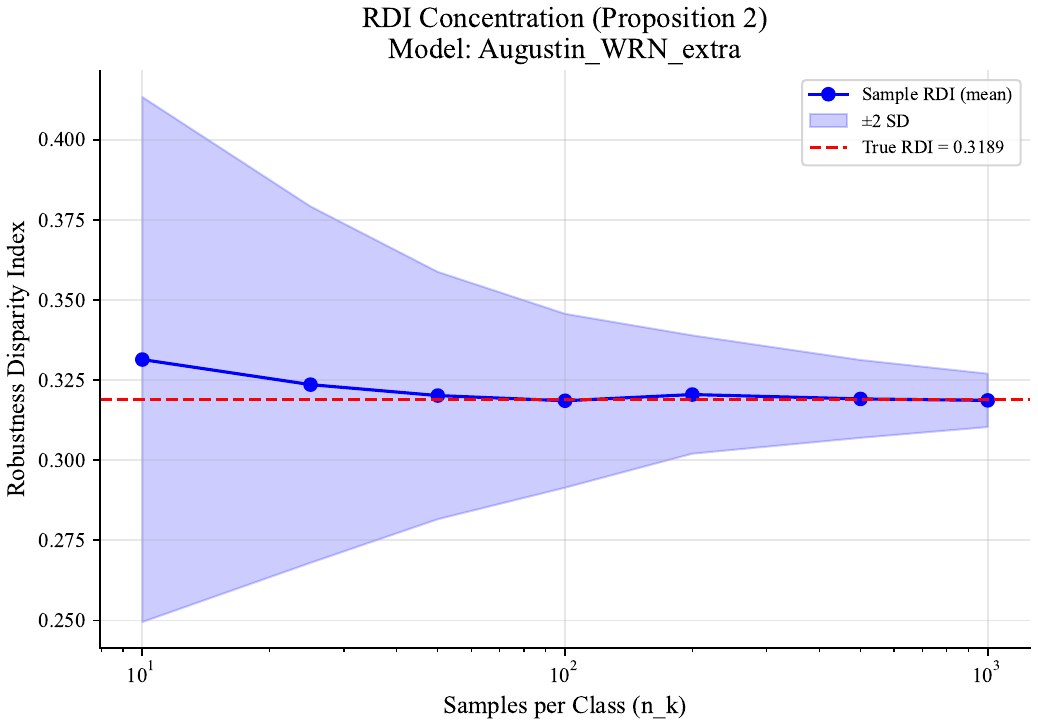}%
    }\hfill
    \subfloat[ImageNet ($K = 1{,}000$, $\delta = 0.05$)]{%
        \includegraphics[width=0.48\linewidth]{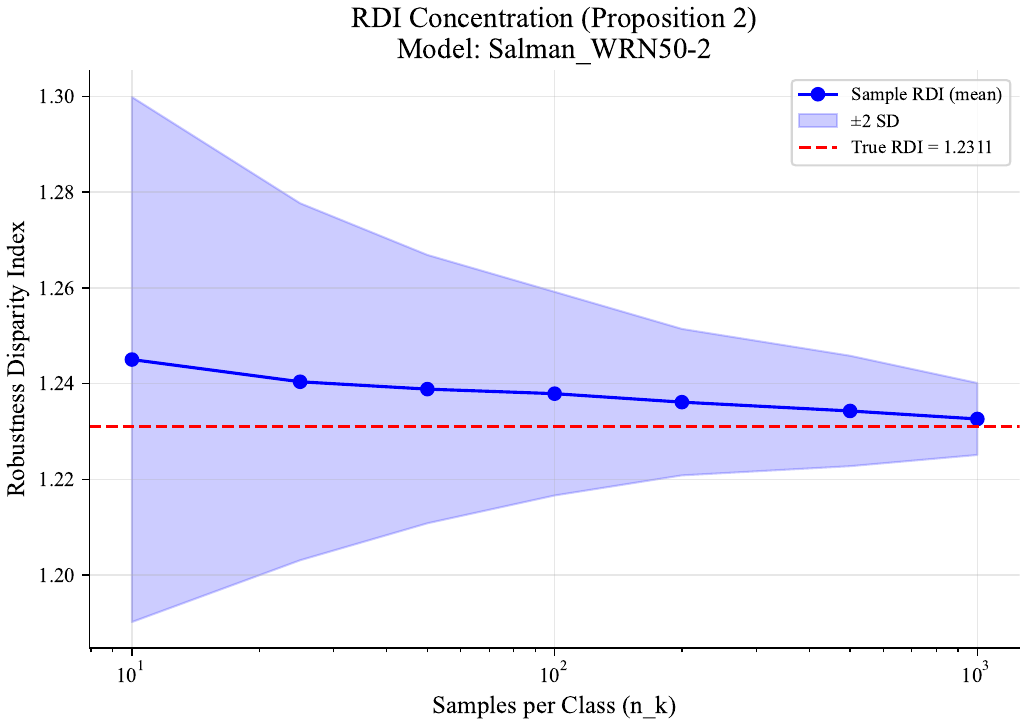}%
    }
    \caption{RDI concentration bounds. The bound tightens as per-class sample size increases. Our CIFAR-10 evaluation with 1{,}000 samples per class provides tight estimates; the ImageNet union bound over 1{,}000 classes requires more samples, but 50 per class still achieves a bound of approximately 0.35.}
    \label{fig:rdi_concentration}
\end{figure}

\subsection{Radar Chart}
\label{app:radar}

\Cref{fig:radar_cifar} shows a radar chart of per-class GREAT Scores for selected CIFAR-10 models, providing an intuitive visualization of how robustness profiles differ across models.

\begin{figure}[H]
    \centering
    \includegraphics[width=0.7\linewidth]{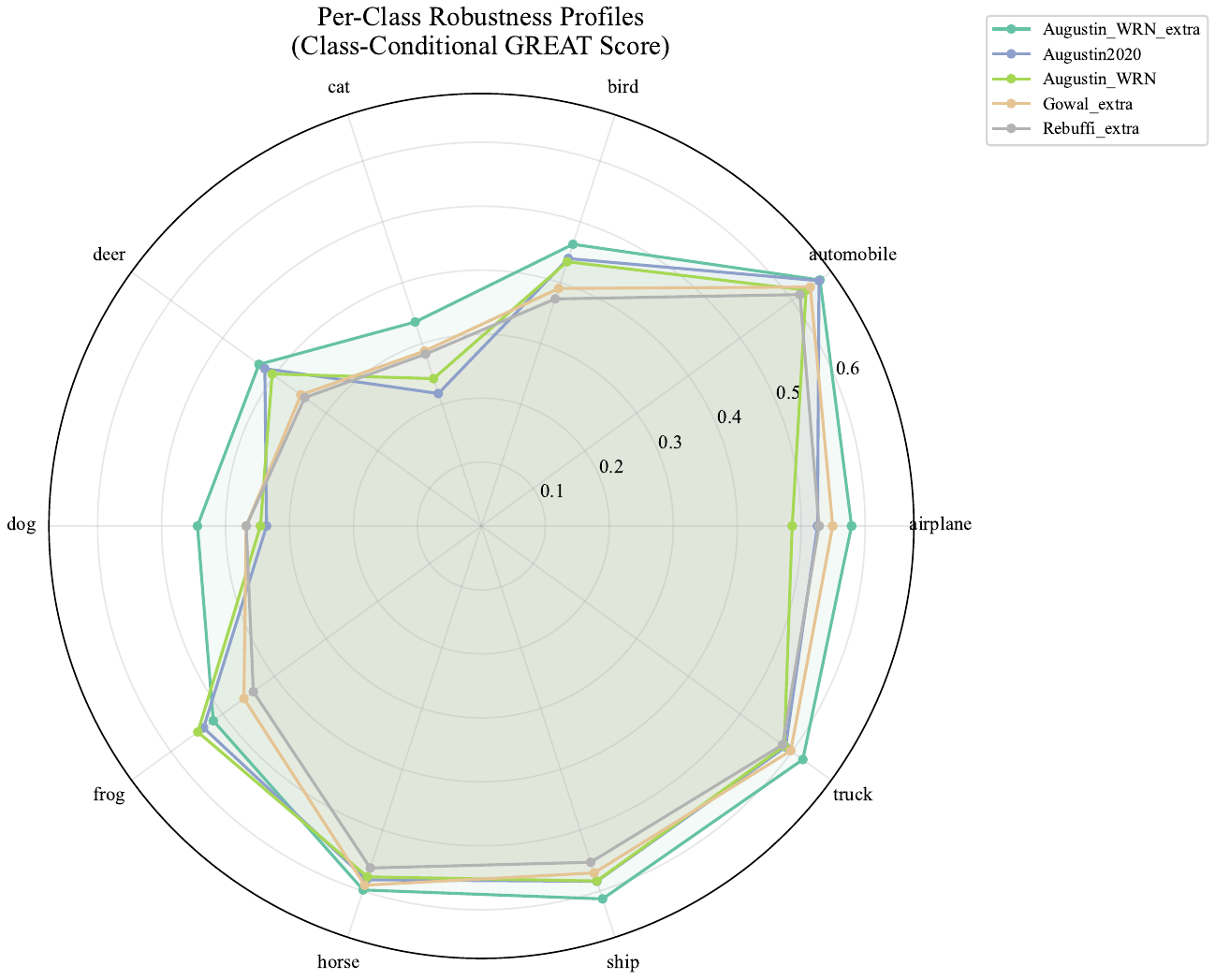}
    \caption{Radar chart of per-class GREAT Scores for selected CIFAR-10 models. Each axis represents one class. Models with higher aggregate scores (outer polygons) show more pronounced asymmetry between classes, visually confirming the robustness-fairness tension.}
    \label{fig:radar_cifar}
\end{figure}

\subsection{ImageNet Heatmap}
\label{app:imagenet_heatmap}

\Cref{fig:heatmap_imagenet} shows the per-class GREAT Score heatmap for ImageNet models, analogous to \Cref{fig:heatmap} in the main text for CIFAR-10.

\begin{figure}[H]
    \centering
    \includegraphics[width=0.85\linewidth]{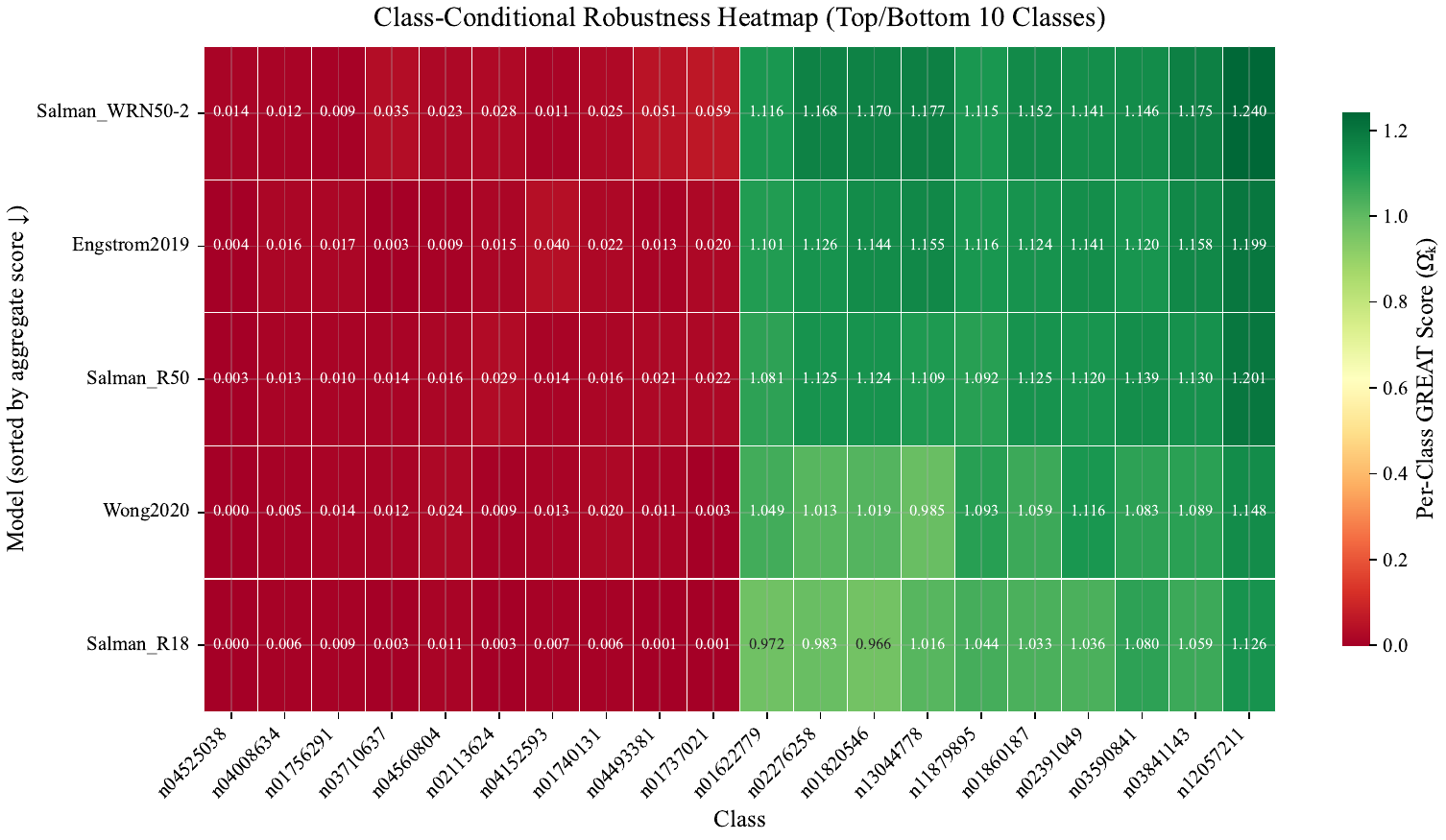}
    \caption{Per-class GREAT Scores for 5 ImageNet $\ell_\infty$ models. The extreme contrast between the brightest and darkest columns reflects the high RDI values ($> 1.1$) reported in \Cref{tab:imagenet_full}. Several classes receive near-zero scores across all models.}
    \label{fig:heatmap_imagenet}
\end{figure}